\def\eqref#1{equation~\ref{#1}}
\def\1{\bm{1}}
\def\rvf{{\mathbf{f}}}
\def\rvh{{\mathbf{h}}}
\def\rvw{{\mathbf{w}}}
\def\rvx{{\mathbf{x}}}
\def\rvy{{\mathbf{y}}}
\def\ve{{\bm{e}}}
\def\vg{{\bm{g}}}
\def\vx{{\bm{x}}}
\def\vy{{\bm{y}}}
\def\vz{{\bm{z}}}
\def\mA{{\bm{A}}}
\def\mE{{\bm{E}}}
\def\mG{{\bm{G}}}
\def\mH{{\bm{H}}}
\def\mK{{\bm{K}}}
\def\mQ{{\bm{Q}}}
\def\mV{{\bm{V}}}
\def\mW{{\bm{W}}}
\def\mX{{\bm{X}}}
\def\mZ{{\bm{Z}}}
\DeclareMathAlphabet{\mathsfit}{\encodingdefault}{\sfdefault}{m}{sl}
\SetMathAlphabet{\mathsfit}{bold}{\encodingdefault}{\sfdefault}{bx}{n}
\theoremstyle{plain}
\newtheorem{theorem}{Theorem}[section]
\newtheorem{lemma}[theorem]{Lemma}
\newtheorem{corollary}[theorem]{Corollary}
\theoremstyle{definition}
\newtheorem{assumption}[theorem]{Assumption}
\theoremstyle{remark}
\newcommand{\wxy}[1]{\textcolor{cyan}{WXY:#1}}
\title{Unifying Generation and Prediction on Graphs with Latent Graph Diffusion}
\author{
    Cai Zhou$^{1,2}$, 
    Xiyuan Wang$^{3,4}$,
    Muhan Zhang$^{3}$\thanks{Corresponding Author.} \\
    $^{1}$Department of Automation, Tsinghua University\\
    $^{2}$Department of Electrical Engineering and Computer Science, Massachusetts Institute of Technology\\
    $^{3}$Institute for Artificial Intelligence, Peking University \\
    $^{4}$School of Intelligence Science and Technology, Peking University\\
    \texttt{caiz428@mit.edu, \{wangxiyuan,muhan\}@pku.edu.cn}
}
\begin{document}

\maketitle

\begin{abstract}
  In this paper, we propose the first framework that enables solving graph learning tasks of all levels (node, edge and graph) and all types (generation, regression and classification) using one formulation. We first formulate prediction tasks including regression and classification into a generic (conditional) generation framework, which enables diffusion models to perform deterministic tasks with provable guarantees. We then propose Latent Graph Diffusion (LGD), a generative model that can generate node, edge, and graph-level features of all categories simultaneously. We achieve this goal by embedding the graph structures and features into a latent space leveraging a powerful encoder and decoder, then training a diffusion model in the latent space. LGD is also capable of conditional generation through a specifically designed cross-attention mechanism. Leveraging LGD and the ``all tasks as generation'' formulation, our framework is capable of solving graph tasks of various levels and types. We verify the effectiveness of our framework with extensive experiments, where our models achieve state-of-the-art or highly competitive results across a wide range of generation and regression tasks.
\end{abstract}

\section{Introduction}\label{SectionIntro}

Graph generation has become of great importance in recent years, with important applications in many fields, including drug design~\citep{denovoconditional} and social network analysis~\citep{graphite}. However, compared with the huge success of generative models in natural language processing~\citep{llama} and computer vision~\citep{StableDiffusionLatent}, graph generation is faced with many difficulties due to the underlying non-Euclidean topological structures. Existing models fail to generate structures and features simultaneously~\citep{ScorebasedGraph, ScorebasedGraphDSS}, or model arbitrary attributes~\citep{DiGressDDGraph}.

Moreover, while general purpose foundation models are built in NLP~\citep{opportunitiesfoundation} which can solve all types of tasks through the generative framework, there are no such general purpose models for graph data, since current graph generative models cannot address regression or classification tasks. People also have to train separate models for different levels (node, edge, and graph) of graph learning tasks. Therefore, it is beneficial and necessary to build a framework that can solve (a) tasks of all types, including generation, classification and regression; (b) tasks of all levels, including node, edge, and graph-level.

\begin{figure}[t]
\vskip -0.1in
\begin{center}
\centerline{\includegraphics[width=0.95\columnwidth]{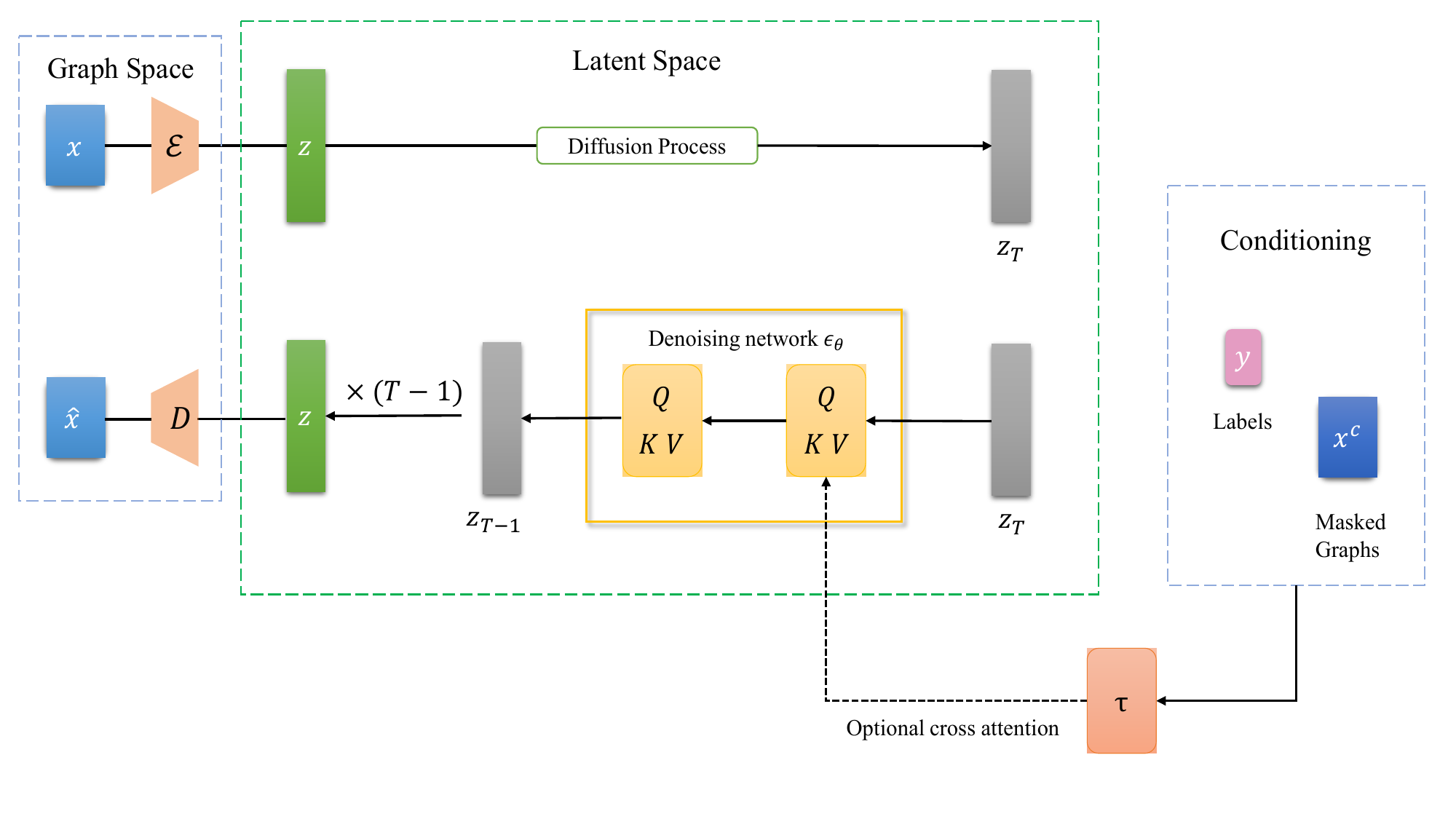}}
\caption{Illustration of the Latent Graph Diffusion framework, which is capable of performing both generation and prediction.}
\label{figure_LGD}
\end{center}
\vskip -0.5in
\end{figure}

\paragraph{Our work.}To overcome the difficulties, we propose Latent Graph Diffusion, \textbf{the first graph generative framework that is capable of solving tasks of various types and levels}. Our main contributions are summarized as follows.

\begin{itemize}[itemsep=2pt,topsep=-2pt,parsep=0pt,leftmargin=10pt]
    \item We conceptually formulate regression and classification tasks as conditional generation, and theoretically show that latent diffusion models can address them with provable guarantees.
    \item We present Latent Graph Diffusion (LGD), which applies a score-based diffusion generative model in the latent space encoded through a powerful pretrained graph encoder, thus overcoming the difficulties brought by discrete graph structures and various feature categories. Moreover, leveraging a specially designed graph transformer, LGD is able to generate node, edge, and graph features simultaneously in one shot with all feature types. We also design a special cross-attention mechanism to enable controllable generation.
    \item Combining the unified task formulation as generation and the powerful latent diffusion models, LGD is able to perform both generation and prediction tasks through generative modeling.
    \item Experimentally, LGD achieves state-of-the-art or highly competitive results across various tasks including molecule (un)conditional generation as well as regression and classification on graphs. Our code is available at \href{https://github.com/zhouc20/LatentGraphDiffusion}{https://github.com/zhouc20/LatentGraphDiffusion}.
\end{itemize}

\section{Related Work}\label{SectionRelatedWork}

\paragraph{Diffusion models.}

Recently, score-based generative models have demonstrated promising performance across a wide range of generation tasks. These models start by introducing gradually increasing noise to the data, and then learn the reverse procedure by estimating the score function, which represents the gradient of the log-density function with respect to the data. This process allows them to generate samples effectively. Two notable works in the realm of score-based generative models are score-matching with Langevin dynamics (SMLD)~\citep{GenerativeGradients} and the denoising diffusion probabilistic model (DDPM)~\citep{DDPM}. SMLD estimates the score function at multiple noise scales and generates samples using annealed Langevin dynamics to reduce the noise. In contrast, DDPM models the diffusion process as a parameterized Markov chain and learns to reverse the forward diffusion process of adding noise. \citet{ScoreBasedSDE} encapsulates SMLD and DDPM within the framework of stochastic differential equations (SDE). \citet{ImprovedTechniques} and \citet{DDIM} further improved the scalability and sampling efficiency of score-based generative methods. More recently, Stable-Diffusion~\citep{StableDiffusionLatent} applied diffusion models to the latent space encoded by pretrained autoencoders, significantly improving computational efficiency and the quality of generated images. Our work has further applied latent space to graph tasks and also improved the generation quality.

\paragraph{Diffusion models for graphs.}

In the context of graph data, \citet{ScorebasedGraph} were the first to generate permutation-invariant graphs using score-based methods. They achieved this by introducing Gaussian perturbations to continuous adjacency matrices. \citet{ScorebasedGraphDSS} took this step further by proposing a method to handle both node attributes and edges simultaneously using stochastic differential equations (SDEs). However, these diffusion models rely on continuous Gaussian noise and do not align well with discrete graph structures. To address this limitation, \citet{DiffusionGraphBenefitDiscrete} designed a diffusion model tailored to unattributed graphs and observed that discrete diffusion is beneficial for graph generation. Addressing this issue even more effectively, DiGress \citep{DiGressDDGraph}, one of the most advanced graph generative models, employs a discrete diffusion process. This process progressively adds discrete noise to graphs by either adding or removing edges and altering node categories. Additionally, \citet{SaGessSamplingGraph} proposes a method to scale DiGress by sampling a covering of subgraphs within a divide-and-conquer framework. To overcome the difficulties, instead of utilizing discrete noise, our work uses continuous noise in latent space, making our model structure well aligned with general diffusion models and achieves better performance. Our generative framework supports both DDPM and SMLD. Given their equivalence in SDE formulations~\citep{ScoreBasedSDE}, we focus on DDPM and its extensions in our main text for illustration. More details on SMLD-based methods and SDE formulations can be found in \cref{SectionBackgroundAppendix}.

\section{Regression and Classification as Conditional Generation}\label{SecFormulationCondGen}

In this section, we use the notation $\rvx$ to denote the known data, and $\rvy$ to denote the labels to be predicted in the task. We use this unified notation not only for simplicity; actually, the formulation in this subsection is \textbf{generic} and not limited to graph data. Our framework is the first to show that \textbf{diffusion models can provably solve regression and classification tasks}.

From a high-level perspective, unconditional generation aims to model $p(\rvx)$, while conditional generation aims to model $p(\rvx|\rvy)$. For classification and regression tasks, traditional deterministic models give point estimates to minimize classification or regression loss. However, from a generative perspective, predicting a point estimation is not the only solution - we can also model the complete conditional distribution $p(\rvy|\rvx)$. Since modeling (conditional) distribution is exactly what generative models are capable of (and good at), it is possible to solve classification and regression tasks with generative models. 

Actually, the classification and regression tasks can be viewed as \textbf{conditional generation tasks}. Intuitively, we only need to exchange the positions of data $\rvx$ and $\rvy$ in the conditional diffusion model; then we obtain a diffusion model that approximates $p(\rvy|\rvx)$. In this case, we use the data $\rvx$ as the condition and want to generate $\rvy$ given the condition $\rvx$. We add Gaussian noise to $\rvy$ in the diffusion process, and we use the denoising network to reverse the process. In the parameterization where the denoising network $\epsilon_\theta$ directly outputs the target instead of the noise analogous to \cref{equation_obj_latent_x0}, i.e. $\hat \rvy=\epsilon_\theta(\rvy_t, t, \rvx)$, then the diffusion objective is
\begin{equation}\label{equation_obj_cond_x}
    \mathcal L (\epsilon_\theta):=\mathbb E_{\rvx, \rvy, t}
    \Big[||\epsilon_\theta(\rvy_t, t, \rvx)-\rvy||_2^2\Big]
\end{equation}

It is straightforward to extend the above method to latent diffusion models, where the only difference is that the diffusion model operates in the latent space of $\rvx, \rvy$. Since the formulation is generic, the method is also applicable to graph data; see \cref{SectionFormulation}. 

Now we present our main theorem on the generalization bound of solving regression tasks with latent diffusion models, while more details on formal formulation and proof are given in \cref{SectionTheoryProof}.

\begin{theorem}\label{theorem_1}
    Suppose \cref{assumption_error}, \cref{assumption_lipschitz}, \cref{assumption_score_estimation} hold, and the step size $h:=T/N$ satisfies $h\preceq 1/L$ where $L\geq 1$. Then the mean absolute error (MAE) of the conditional latent diffusion model in the regression task is bounded by
    \begin{align}
    {\rm MAE}&\leq \mathbb E_q[||w^\top \hat \tau (\vx, y) - w^\top \tau (\vx, y)||]+\epsilon_1\\ 
    &\preceq \underbrace{\sqrt{{\rm KL}(q_z||\gamma^d)}\exp(-T)}_{\text{convergence of forward process}}+\underbrace{(L\sqrt{dh}+L(m_1+m_2) h)\sqrt{T}}_{\text{discretization error}} + \underbrace{\epsilon_{\rm score} \sqrt{T}}_{\text{score estimation error}}+\underbrace{\epsilon_1}_{\text{encoder error}}
    \end{align}
    where $q_z$ is the ground truth distribution of $\vz:=w^t\big(\tau(\vx, y)-\mathcal E(\vx)\big)$.
\end{theorem}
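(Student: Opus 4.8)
The plan is to split the mean absolute error into a purely latent-space quantity plus the irreducible reconstruction error $\epsilon_1$ of the encoder/decoder, and then to control the latent-space quantity via a quantitative convergence estimate for the conditional denoising diffusion run on the latent variable $\vz = w^\top\big(\tau(\vx,y)-\mathcal E(\vx)\big)$.

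First I would establish the first displayed inequality. The regressor forms its prediction $\hat y$ by decoding the latent produced by the reverse process, and \cref{assumption_error} bounds both the gap between $\hat y$ and $w^\top\hat\tau(\vx,y)$ and the gap between the true label $y$ and $w^\top\tau(\vx,y)$; a triangle inequality then gives $\mathrm{MAE}\le\mathbb E_q\big[\|w^\top\hat\tau(\vx,y)-w^\top\tau(\vx,y)\|\big]+\epsilon_1$. Writing $\hat{\vz}$ for the generated latent and recalling the definition of $\vz$, the surviving term is $\mathbb E_q\|\hat{\vz}-\vz\|$. Since in a regression task the label is a deterministic function of $\vx$, the conditional law $q_z(\cdot\mid\vx)$ of $\vz$ given $\vx$ is a point mass, so $\mathbb E_q\|\hat{\vz}-\vz\|=\mathbb E_{\vx}\big[W_1\big(\hat p_z(\cdot\mid\vx),q_z(\cdot\mid\vx)\big)\big]$, which reduces the problem to bounding a Wasserstein-$1$ distance between the distribution generated by the conditional reverse diffusion and the ground-truth latent distribution.

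Next I would bound this $W_1$ distance by the now-standard three-term decomposition for reverse-SDE (DDPM-type) samplers on $\R^d$, applied conditionally on $\vx$ and then averaged (and specialized to the SDE formulation recalled in \cref{SectionBackgroundAppendix}): (i) a forward-process term coming from the exponential contraction of the Ornstein--Uhlenbeck channel, which leaves an initialization mismatch of order $\sqrt{\mathrm{KL}(q_z\|\gamma^d)}\,e^{-T}$ once one passes from KL to $W_1$ using Pinsker/Talagrand together with the latent moment control; (ii) a score-estimation term obtained by a Girsanov change of measure on the reverse SDE, which under \cref{assumption_score_estimation} contributes $\epsilon_{\mathrm{score}}\sqrt{T}$; and (iii) a discretization term from the Euler--Maruyama steps of size $h$, which under the $L$-Lipschitz score bound of \cref{assumption_lipschitz} and the hypothesis $h\preceq 1/L$ collapses to $(L\sqrt{dh}+L(m_1+m_2)h)\sqrt{T}$, with $m_1,m_2$ the relevant moments of $q_z$. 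Summing (i)--(iii) and adding back $\epsilon_1$ from the first step produces exactly the claimed bound.

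The main obstacle is the combination of metric conversion and conditioning. The off-the-shelf diffusion-convergence statements bound total variation or KL to an \emph{unconditional} target, so I must (a) obtain a $W_1$ bound --- which is precisely where the moment bounds $m_1,m_2$, rather than any boundedness of the latent domain, have to be threaded through both the initialization and the discretization estimates --- and (b) run the whole argument conditionally on $\vx$ and average, verifying that the quantity $\epsilon_{\mathrm{score}}$ of \cref{assumption_score_estimation} is exactly the $\vx$-averaged $L^2$ score error that Girsanov's inequality consumes and that $\mathrm{KL}(q_z\|\gamma^d)$ is finite so the forward term is meaningful. The remaining bookkeeping --- tracking constants so that the regime $h\preceq 1/L$ absorbs the higher-order Euler terms into the displayed form --- is routine but must be carried out with care.
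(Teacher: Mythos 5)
Your skeleton matches the paper's: a triangle inequality peels off the encoder error $\epsilon_1$, the surviving term $\mathbb E_q\|\hat\vz-\vz\|$ is identified as a latent-space generation error for $\vz=w^\top(\tau(\vx,y)-\mathcal E(\vx))$, and that error is controlled by the standard three-term (initialization / discretization / score-estimation) convergence estimate for DDPM samplers, with $m_1,m_2$ entering through $\mathbb E\|\vz\|^2\le(m_1+m_2)^2$ via the split $\vz=\vz_1-\vz_2$. Where you diverge is in how the conditional model is reduced to something the convergence literature covers, and the divergence matters. The paper fixes the conditioning mechanism to be the additive one of \cref{equation_add_cond}, so that sampling $\hat\tau(\vx,y)$ given $\mathcal E(\vx)$ is \emph{exactly} unconditional sampling of the residual, and it then applies the total-variation convergence lemma imported from \citet{SamplingScore} to the \emph{marginal} law $q_z$ of $\vz$ over the data distribution. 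You instead propose to run the whole argument conditionally on $\vx$ and average. But by your own observation the conditional law $q_z(\cdot\mid\vx)$ is a point mass in a regression task, so $\mathrm{KL}(q_z(\cdot\mid\vx)\,\|\,\gamma^d)=\infty$ and the Lipschitz-score hypothesis of \cref{assumption_lipschitz} cannot hold for the conditional target; the three-term bound you invoke is vacuous pathwise in $\vx$, and averaging does not rescue it. The $\mathrm{KL}(q_z\|\gamma^d)$ appearing in the statement is the marginal quantity, and the argument has to go through the (generally non-degenerate) marginal residual distribution as the paper does.

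That said, your worry about metric conversion does locate a real soft spot. The quantity to be bounded is $\mathbb E\|\hat\vz-\vz\|$, a coupling/Wasserstein-type quantity, whereas the imported lemma controls total variation, which does not bound expected displacement in general. The paper handles this by writing $\mathrm{TV}(p_z,q_z):=\tfrac12\mathbb E_{q_z}\|\hat\vz-\vz\|$, i.e.\ by fiat rather than by argument; your plan to derive a genuine $W_1$ bound, threading the moment control through the initialization and discretization terms, is the more honest route --- but it must be carried out against the marginal $q_z$, not the conditional point mass.
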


Converting deterministic tasks such as regression and classification into generation has several potential advantages. First, generative models are capable of modeling the entire distribution, thus being able to provide information confidence interval or uncertainty by multiple predictions; in comparison, traditional deterministic models can only give point estimates. Secondly, generative models are often observed to have larger capacity and generalize well on large datasets~\citep{generalizationdiffusiongeometry}. Specifically for latent diffusion models, since the generative process is performed in the expressive latent space, the training of diffusion models could be much faster than training a deterministic model from scratch, and we obtain a powerful representation of the data that can be used in other downstream tasks. Moreover, now that the deterministic tasks could be unified with generation, thus can be simultaneously addressed by one unified model, which is the precondition of training a foundation model. We refer readers to \cref{SectionExperiment} and \cref{ExperimentAppendix} for detailed experimental observations.

\section{Latent Graph Diffusion}\label{SectionLGD}

As discussed in \cref{SectionIntro} and \cref{SectionRelatedWork}, existing graph diffusion models fail to generate both graph structures and graph features simultaneously. In this section, we introduce our novel \textbf{Latent Graph Diffusion (LGD)}, a powerful graph generation framework which is the first to enable \textbf{simultaneous structure and feature generation} in one shot. Moreover, LGD is capable of generating features of \textbf{all levels} (node-level, edge-level and graph-level) and \textbf{all types} (discrete or categorical, continuous or real-valued).

\paragraph{Notations.} Given $n\in \mathbb{N}$, let $[n]$ denote $\{1,2,3,...,n\}$. For a graph $G$ with $n$ nodes, we denote the node set as ${v_i}, i\in [n]$, and represent the node feature of $v_i$ as $\vx_i\in \mathbb R^{d_v}$. In the context of node pairs $(v_i, v_j)$, there can be either an edge $e_{ij}$ with its edge feature $\ve_{ij}\in \mathbb R^{d_e}$ or no edge connecting $v_i$ and $v_j$. To jointly model the structures and features (of node, edge, graph level), we treat the case where no edge is observed between $v_i$ and $v_j$ as a special edge type. We denote this as $e_{ij}$ with an augmented feature $\ve_{ij}=\ve'\in \mathbb R^{d_e}$, where $\ve'$ is distinct from the features of existing edges, resulting in a total of $n^2$ such augmented edges. We represent all node features as $\mX \in \mathbb R^{n\times d_v}$ where $\mX_i=\vx_i$, and augmented edge features as $\mA \in\mathbb R^{n\times n\times d_e}$ where $\mA_{i,j}=\ve_{ij}$ (adjacency matrix with edge features). Graph-level attributes, denoted $\vg\in \mathbb R^{d_g}$, can be represented by a virtual node or obtained by pooling over node and edge attributes (see \cref{SubsecAutoencoding} for details). For brevity, we temporarily omit graph-level attributes in the following text. Note that all attributes can be of arbitrary types, including discrete (categorical) and continuous (real-valued). 

\subsection{Overview}\label{SubsecOverview}

Simultaneous generation of structures and features in the original graph space is challenging due to discrete graph structures and diverse graph features at various levels and types. Therefore, we apply our graph diffusion models in the latent space $\mathcal H$ rather than operating directly on the original graphs. Graphs are encoded in the latent space using a \textbf{powerful pretrained graph encoder} $\mathcal E_\phi$ parameterized by $\phi$. In this latent space, all $n$ nodes and $n^2$ ``augmented edges'' have continuous latent representations. The node and edge representations are denoted as $\mH = (\mZ,\mW)$, where $\mZ\in \mathbb R^{n\times d}$ and $\mW\in \mathbb R^{n\times n\times d}$, with $d$ representing the hidden dimension.
\begin{equation}
    \mH = (\mZ,\mW)=\mathcal E_\phi(\mX,\mA)
\end{equation}
To recover the graph structures and features from the latent space $\mathcal H$, we employ a \textbf{light-weight task-specific decoder} $\mathcal D_\xi$, which is designed to be lightweight and is parameterized by $\xi$. This decoder produces the final predicted properties of the graph for specific tasks, including node, edge, and graph-level attributes. It is important to note that we consider the absence of an edge as a specific type of edge feature. Furthermore, we can derive graph-level attributes from the predicted node features $\hat{\mX}$ and edge features $\hat{\mA}$, as explained in detail in \cref{SubsecAutoencoding}.
\begin{equation}
    (\hat{\mX},\hat{\mA})=\mathcal D_\xi(\hat{\mH})
\end{equation}
To ensure the quality of generation, the latent space $\mathcal H$ must meet two key requirements: (a) it should be powerful enough to effectively encode graph information, and (b) it should be conducive to learning a diffusion model~\citep{StableDiffusionLatent}, thus should not be too scattered nor high-dimensional. Consequently, both the architectures and training procedures of the encoder $\mathcal E_\phi$ and the decoder $\mathcal D_\xi$ are of utmost importance. We will dive into the architecture and training details in \cref{SubsecAutoencoding}.

After pretraining the encoder and decoder, we get a powerful latent space that is suitable for training a diffusion model. We can now train the generative model $\mathbf{\epsilon}_\theta$ parametrized by $\theta$ operating on the latent space. As explained, the generative model could have arbitrary backbone, including SDE-based and ODE-based methods. For illustration, we still consider a Denoising Diffusion Probabilistic Model (DDPM)~\citep{DDPM}, while other generative methods can be analyzed similarly. In the forward diffusion process, we gradually add Gaussian noise to the latent representation $\mH=(\mZ,\mW)$ until the latent codes converge to Gaussian. In the backward denoising process, we use a denoising model parameterized by $\mathbf{\epsilon}_\theta$ parametrized by $\theta$ to reconstruct the latent representation of the input graph, obtaining $\hat{\mH}$. The network $\theta$ can be a general GNN or graph transformer, depending on tasks and conditions (see \cref{SubsecExpSettings}). In the most general case, we use a graph transformer enhanced by expanded edges; see \cref{SubsecDiffusionModel} for architecture and training details.

In conclusion, we have successfully overcome the challenges posed by discrete graph structures and diverse attribute types by applying a diffusion model in the latent space. Our model is the first to be capable of simultaneously generating all node, edge, and graph-level attributes in a single step. The overview of our latent graph diffusion is shown in \cref{figure_LGD}. 

\subsection{Autoencoding}\label{SubsecAutoencoding}

\paragraph{Architecture.}

In principle, the encoder can adopt various architectures, including Graph Neural Networks (GNNs) and graph transformers. However, to generate structures and features at all levels effectively, the encoder must be able to represent node, edge, and graph-level features simultaneously. As discussed in \cref{SubsecOverview}, to generate samples of high quality, the latent space $\mathcal H$ should be both powerful and suitable for training a diffusion model. To fulfill these criteria, we employ a specially designed augmented edge-enhanced graph transformer throughout this paper, as elaborated in \cref{SubsecDiffusionModel}. In cases where input graph structures and features are complete, we also allow the use of message-passing-based Graph Neural Networks (MPNNs) and positional encodings (PEs) during the pretraining stage. However, these techniques may not be applicable for the denoising network, as the structures of the noised graphs become corrupted and ill-defined.

Since the latent space is already powerful, the decoder can be relatively simple. The decoder is task-specific, and we pretrain one for each task along with the encoder. In generation tasks, the decoder is responsible for recovering the structure and features of the graph. In classification tasks or regression tasks, it serves as a classifier or regression layer, respectively. In all cases, we set the decoder for each task as a single linear layer.

\paragraph{Training.}

Following \citet{StableDiffusionLatent} and \citet{GeometricLatentDiffusion}, we jointly train the encoder and decoder during the pretraining stage. This training process can be carried out in an end-to-end manner or similar to training autoencoders. The training objective encompasses a reconstruction loss and a regularization penalty.
\begin{equation}
    \mathcal L=\mathcal L_{recon}+\mathcal L_{reg}
\end{equation}

The decoder $\mathcal D$ reconstructs the input graph from the latent representation, denoted $(\hat \mX, \hat \mA) = \mathcal D(\mH) = \mathcal D(\mathcal E(\mX, \mA))$. The specific form of the reconstruction loss $\mathcal L_{recon}$ depends on the type of data, such as cross-entropy or Mean Squared Error (MSE) loss. To force the encoder to learn meaningful representations, we consider the following reconstruction tasks: (i) reconstruct node features $\mX$ from node representations $\mZ$; (ii) reconstruct edge features $\mA$ from edge representations $\mW$; (iii) reconstruct edge features $\ve_{ij}$ from representations of node $i$ and $j$; (iv) reconstruct node features $\vx_i$ from edge representations $\ve_{i\cdot}$; (v) reconstruct tuples $(\vx_i, \vx_j)$ from edge representations $\ve_{ij}$. When applicable, we opt to reconstruct node-level and edge-level positional encodings (PEs) according to the latent features. We can also employ masking techniques to encourage the encoder to learn meaningful representations. It is worth noting that downstream classification or regression tasks can be seen as a special case of reconstructing the masked graph; for more details, refer to \cref{SectionFormulation}.

To prevent latent spaces from having arbitrarily high variance, we apply regularization. This regularization can take the form of a KL-penalty aimed at pushing the learned latent representations towards a standard normal distribution (referred to as \textit{KL-reg}), similar to the Variational Autoencoder (VAE)~\citep{VAE}. Alternatively, we can employ a vector quantization technique (referred to as \textit{VQ-reg}), as seen in \citep{VQVAE, VQGraph}. 

\subsection{Diffusion model}\label{SubsecDiffusionModel}

\paragraph{Architecture.}

Note that during both the forward and backward diffusion processes, we either add noise to the latent representations of nodes and edges or attempt to denoise them. When we need to generate the graph structures, the edge indexes are unknown, making the corrupted latent representations do not have a clear correspondence with the originaledge information. Consequently, message-passing neural networks (MPNNs) and similar methods are not applicable, as their operations rely on clearly defined edge indexes. 

To address the ambiguity of edges in the noised graph, we require operations that do not depend on edge indexes. Graph transformers prove to be suitable in this context, as they compute attention between every possible node pair and do not necessitate edge indexes. However, we can still incorporate representations of augmented edges into the graph transformers through special design. To facilitate the joint generation of node, edge, and graph-level attributes, we adopt the most general graph transformer with augmented edge enhancement. We design a novel self-attention mechanism that could update node, edge and graph representations as described in \cref{EquationEdgeAtten}, see \cref{SubsecArchitecture} for details. 

\paragraph{Training and sampling.}

Training the latent diffusion model follows a procedure similar to the one outlined in \cref{SectionBackgroundAppendix}, with the primary distinction being that training is conducted in the latent space. During this stage, both the encoder and the decoder remain fixed. Denoting $\mH_0=\mathcal E(\mX,\mA)$, in the forward process, we progressively introduce noise to the latent representation, resulting in $\mH_t$.
\begin{equation}
    \mathcal L_{LGD} (\epsilon_\theta):=\mathbb E_{\mathcal E(\mX, \mA),\epsilon_t\sim\mathcal N(\mathbf 0,\mathbf I)}
    \Big[||\epsilon_\theta(\mH_t, t)-\epsilon_t||_2^2\Big]
\end{equation}
where we implement the function $\epsilon_\theta^{(t)}$ as described in \cref{equation_obj_raw} using a time-conditional transformer, as detailed in \cref{EquationEdgeAtten}. There is also an equivalent training objective in which the model $\epsilon_\theta$ directly predicts $\mH_0$ based on $\mH_t$ and $t$, rather than predicting $\epsilon_t$.
\begin{equation}\label{equation_obj_latent_x0}
    \mathcal L_{LGD} (\epsilon_\theta):=\mathbb E_{\mathcal E(\mX, \mA)}
    \Big[||\epsilon_\theta(\mH_t, t)-\mH_0||_2^2\Big]
\end{equation}
In the inference stage, we directly sample a Gaussian noise in the latent space $\mathcal H$, and sample $\mH_{t-1}$ from the generative processes iteratively as described in \cref{SectionBackgroundAppendix}. After we get the estimated denoised latent representation $\hat \mH_0$, we use the decoder to recover the graph from the latent space, which finishes the generation process. 

\subsection{Conditional generation}\label{SubsecCondGen}

Similarly to other generative models~\citep{StableDiffusionLatent}, our LGD is also capable of controllable generation according to given conditions $\vy$ by modeling conditional distributions $p(\rvh|\vy)$, where $\rvh$ is the random variable representing the entire latent graph representation for simplicity. This can be implemented with conditional denoising networks $\epsilon_\theta(\rvh, t, \vy)$, which take the conditions as additional inputs. Encoders and decoders can also be modified to take $\vy$ as additional input to shift latent representations towards the data distribution aligned with the condition $\vy$. 

\paragraph{General conditions.}

Formally, we preprocess $\vy$ from various data types and even modalities (e.g. class labels or molecule properties) using a domain specific encoder $\tau$, obtaining the latent embedding of the condition $\tau(\vy)\in\mathbb R^{m\times d_\tau}$, where $m$ is the number of conditions and $d_\tau$ the hidden dimension. The condition could be embedded into the score network through various methods, including simple addition or concatenation operations~\citep{GeometricLatentDiffusion} and cross-attention mechanisms~\citep{StableDiffusionLatent}. We provide details of the cross-attention mechanisms in \cref{SubsecArchitecture}.

The conditional LGD is learned via
\begin{equation}
    \mathcal L_{LGD}:=\mathbb E_{\mathcal E(\mX, \mA), \vy,\epsilon_t\sim \mathcal N(\mathbf 0,\mathbf I),t}\Big[||\epsilon_\theta(\mH_t, t, \tau(\vy)-\epsilon_t||_2^2\Big]
\end{equation}
if the denoising network is parameterized to predict the noise; $\epsilon_\theta$ can also predict $\mH_0$ conditioning on $\mH_t,t, \tau(\vy)$ analogous to \cref{equation_obj_latent_x0}.

\paragraph{Masked graph.} Consider the case where we have a graph with part of its features known, and we want to predict the missing labels which can be either node, edge, or graph level. For example, in a molecule property prediction task where the molecule structures and atom/bond types are known, we want to predict its property. This can be modeled as a problem of predicting a masked graph-level feature (molecule property) given the node and edge features. In this case, the condition is the latent representation of a graph that is partially masked. We denote the features of a partially masked graph as $(\mX^c, \mA^c, \vg^c)$, where the superscript $c$ implies that the partially masked graph is the ``condition'', $\mX^c_i=\vx^c_i$, $\mA^c_{i,j}=\ve_{ij}^c$, $\vg^c$ are observed node, edge and graph-level features respectively. The missing labels we aim to predict are denoted as $\vy$, then the complete graph $(\mX, \mA, \vg)$ can be recovered by $(\mX^c, \mA^c, \vg^c)$ and $\vy$. Intuitively, this is similar to image inpainting, where we hope to reconstruct the full image based on a partially masked one. Therefore, we can model the common classification and regression tasks as a conditional generation, where we want to predict the label $\vy$ (or equivalently, the full graph feature $(\mX, \mA, \vg)$) given the condition $(\mX^c, \mA^c, \vg^c)$, see \cref{SectionFormulation} for formal formulations. In this case, as the condition is a graph as well (though it might be partially masked), we propose a novel cross-attention for graphs as shown in  \cref{EquationCrossAtten}, see \cref{SubsecArchitecture} for more details. 

\section{Unified Task Formulation as Generation}\label{SectionFormulation}

Based on graph generative model on the latent space, we can address generation tasks of all levels (node, edge, and graph) using one LGD model. In this section, we detail how tasks of different types can be formulated as generation, thus becoming tractable using one generative model.


Now we summarize our Latent Graph Diffusion framework, which is able to (1) solve tasks of all types, including generation, regression, and classification through the framework of graph generation; and (2) solve tasks of all levels, including node, edge and graph level. See \cref{figure_LGD} for illustration.

First, since LGD is an internally generative model, it is able to perform unconditional and conditional generation as discussed in \cref{SectionLGD}. For nongenerative tasks including regression and classification, we formulate them into conditional generation in \cref{SecFormulationCondGen}, thus can also be solved by our LGD model. In particular for graph data, we want to predict the labels $\vy$ which can be node, edge, or graph level and of arbitrary types, given the condition of a partially observed graph $\mX^c, \mA^c, \vg^c$. To better align with the second target of full graph generation, we model this problem as predicting $p(\mX,\mA,\vg|\mX^c, \mA^c, \vg^c)$, where $(\mX,\mA,\vg) = (\mX^c, \mA^c, \vg^c, \vy)$ is the full graph feature that combines the observed conditions and the labels to be predicted. In other words, the condition can be viewed as a partially masked graph whose masked part is the labels $\vy$, and our task becomes graph inpainting, i.e. generate full graph features conditioning on partially masked features.

Now we have formulated all tasks as generative modeling. The feasibility of the second goal to predict all-level features is naturally guaranteed by the ability of our augmented-edge enhanced transformer architecture described in \cref{SectionLGD}. We leave the discussion of the possibility of solving tasks from different domains to \cref{SectionTrain}.


\section{Experiments}\label{SectionExperiment}

In this section, we use extensive experiments that cover tasks of different types (regression and classification) and levels (node, edge and graph) to verify the effectiveness of LGD. We first conduct experiments on traditional generation tasks to verify LGD's generation quality. We consider both unconditional generation and conditional generation tasks. 
We then show that utilizing our unified formulation, LGD can also perform well on prediction tasks. To the best of our knowledge, we are the first to address regression and classification tasks with a generative model efficiently. More experimental details and additional experimental results can be found in \cref{ExperimentAppendix}. 

\subsection{Generation task}

For both unconditional and conditional generation, we use QM9~\citep{QM9}, one of the most widely adopted datasets in molecular machine learning research, which is suitable for both generation and regression tasks. QM9 contains both graph and 3D structures together with several quantum properties for 130k small molecules, limited to 9 heavy atoms. We provide more generation results on larger dataset MOSES~\citep{Moses} in \cref{ExperimentAppendix}.

\paragraph{Unconditional generation.}

\begin{table}[t]
\caption{Unconditional generation results on QM9.}
\label{Table_QM9_unconditional}
\vskip 0.15in
\begin{center}
\begin{tabular}{l|ccccc}
\toprule
Model & Validity(\%)$\uparrow$ & Uniqueness(\%)$\uparrow$ & FCD$\downarrow$ & NSPDK$\downarrow$ & Novelty(\%)$\uparrow$\\
\midrule 
MoFlow &  91.36 & \textbf{98.65} & 4.47 & 0.017 & 94.72\\
GraphAF &  74.43 & 88.64 & 5.27 & 0.020 & 86.59\\
GraphDF & 93.88 &  98.58 & 10.93 &  0.064 & \textbf{98.54} \\
GDSS & 95.72 & 98.46 & 2.9 & 0.003 & 86.27\\
DiGress & 99.01 & 96.34 & 0.25 & 0.0003 & 35.46\\
HGGT & 99.22 & 95.65 & 0.40 & 0.0003 & 24.01\\
GruM & \textbf{99.69} & 96.90 & 0.11 & \textbf{0.0002} & 24.15\\
\midrule
LGD-small (ours) & 98.46 & 97.53 & 0.32 & 0.0004 & 56.35\\
LGD-large (ours) & 99.13 & 96.82 & \textbf{0.10} & \textbf{0.0002} & 38.56\\
\bottomrule
\end{tabular}
\end{center}
\vskip -0.1in
\end{table}


In the unconditional molecular modeling and generation task, we measure the model's ability to learn the distribution of molecular data and generate molecules that are both chemically valid and structurally diverse. Following the common setting~\citep{GeometricLatentDiffusion}, we generate 10k molecules in the evaluation stage, and report the validity and uniqueness, which are the percentages of valid (measured by RDKIT) and unique molecules among the generated molecules. We also report neighborhood subgraph pairwise distance kernel (NSPDK) and Frechet ChemNet Distance (FCD) metrics to measure the similarity between generated samples and the test set. We leave further discussions on relaxed validity~\citep{ScorebasedGraphDSS} and novelty in \cref{ExperimentAppendix}. For baseline models, we select classical and recent strong models including MoFlow~\citep{MoFlow}, GraphAF~\citep{GraphAF}, GraphDF~\citep{GraphDF}, GDSS~\citep{ScorebasedGraphDSS}, DiGress~\citep{DiGressDDGraph}, HGGT~\citep{HGGT}, GruM~\citep{Grum}.

The results are shown in \cref{Table_QM9_unconditional}. LGD-large achieves the best results in FCD and NSPDK metrics, and is highly competitive in validity and uniqueness. This verifies the advantages of our latent diffusion model and the ability to generate nodes and edges simultaneously.
It is worth mentioning that our LGD works in a different and potentially more difficult setting compared with some works like GeoLDM~\citep{GeometricLatentDiffusion}. In particular, LGD generates both atoms (nodes) and bonds (edges) simultaneously in one shot, while GeoLDM only generates atoms (nodes), and then computes bonds (edges) with a strong decoder based on pair-wise atomic distances and atom types. We also do not use any 3D information, so it is indirect to compare with 3D baselines due to different training settings.

\paragraph{Conditional generation.}

\begin{table}[t]
\caption{Conditional generation results on QM9 (MAE $\downarrow$)}
\label{Table_qm9_conditional}
\vskip -0.3in
\begin{center}
\begin{tabular}{l|cccccc}
\toprule
Model & $\mu$ & $\alpha$ & $\epsilon_{\textrm{HOMO}}$ & $\epsilon_{\textrm{LUMO}}$ &$\Delta \epsilon$ & $c_{\textrm{v}}$ \\
\midrule
$\omega$~\citep{GeometricLatentDiffusion} & 0.043 & 0.10 & 39 & 36 & 64 & 0.040\\
$\omega$ (ours)& 0.058 & 0.06 & 18 & 24 & 28 & 0.038 \\
\midrule
Random & 1.616 & 9.01 & 645 & 1457 & 1470 & 6.857\\
$N_{\rm atom}$ & 1.053 & 3.86 & 426 & 813 & 866 & 1.971\\
EDM & 1.111 & 2.76 & 356 & 584 & 655 & 1.101\\
GeoLDM & 1.108 & \textbf{2.37} & 340 & \textbf{522} & 587 & 1.025\\
\midrule
LGD (ours) & \textbf{0.879} & 2.43 & \textbf{313} & 641 & \textbf{586} & \textbf{1.002}\\
\bottomrule
\end{tabular}
\end{center}
\end{table}

We still use the QM9 dataset for our conditional generation task, where we aim to generate target molecules with the desired chemical properties. We follow the settings of \citep{GeometricLatentDiffusion} and split the training set into two parts, each containing 50k molecules. We train the latent diffusion model and separate property prediction networks $\omega$ (with architectures described in \cref{EquationEdgeAtten}) on two halves, respectively. For evaluation, we first generate samples using the latent diffusion model given the conditioning property $y$, and use $\omega$ to predict the property $\hat y$ of the generated molecules. We measure the Mean Absolute Error between $y$ and $\hat y$, and experiment with six properties: Dipole moment $\mu$, polarizability $\alpha$, orbital energies $\epsilon_{HOMO}, \epsilon_{LUMO}$ and their gap $\Delta\epsilon$, and heat capacity $C_v$.

We report EDM~\citep{EDM3D} and GeoLDM~\citep{GeometricLatentDiffusion} as baseline models. We also incorporate (a) the MAE of the regression model $\omega$ of ours and \citep{GeometricLatentDiffusion}, which can be viewed as a lower bound of the generative models; (b) \textit{Random}, which shuffle the labels and evaluate $\omega$, thus can be viewed as the upper bound of the MAE metric; (c) $N_{\rm atoms}$, which predicts the properties based only on the number of atoms in the molecule.

As shown in \cref{Table_qm9_conditional}, even if we do not use 3D information and generate both atoms and bonds in one simultaneously, LGD achieves the best results in 4 out of 6 properties. The results verify the excellent controllable generation capacity of LGD.

\subsection{Prediction with conditional generative models}

We evaluate LGD extensively on prediction tasks, including regression and classification tasks of different levels. More results can be found in \cref{ExperimentAppendix}.

\paragraph{Regression.}

\begin{wraptable}{r}{5.5cm}
\caption{Zinc12K results (MAE $\downarrow$). Shown is the mean $\pm$ std of 5 runs.}
\label{Table_zinc}
\vskip -0.2in
\begin{center}
\begin{small}
\begin{tabular}{l c}
\toprule
Method & Test MAE \\
\midrule
GIN  & $0.163\pm0.004$\\
PNA & $0.188\pm0.004$\\
GSN & $0.115\pm 0.012$\\
DeepLRP & $0.223\pm0.008$\\
OSAN & $0.187\pm0.004$\\
KP-GIN+ & $0.119\pm0.002$\\
GNN-AK+ & $0.080\pm0.001$\\
CIN & $0.079\pm0.006$\\
GPS & $0.070\pm0.004$\\
\midrule
LGD-DDIM (ours) & $0.081\pm 0.006$\\
LGD-DDPM (ours) & $\textbf{0.065}\pm 0.003$\\
\bottomrule
\end{tabular}
\end{small}
\end{center}
\vskip -0.2in
\end{wraptable}

For regression task, we select ZINC12k~\citep{Zinc}, which is a subset of ZINC250k containing 12k molecules. The task is molecular property (constrained solubility) regression, measured by MAE. We use the official split of the dataset. 

Note that we are the first to use generative models to perform regression tasks, therefore no comparable generative models can be selected as baselines. Therefore, we choose traditional regression models, including GIN~\citep{HowPowerfulGNN}, PNA~\citep{PNA}, DeepLRP~\citep{CountSubstructures}, OSAN~\citep{OSAN}, KP-GIN+~\citep{Khop}, GNN-AK+~\citep{GNNAK}, CIN~\citep{CIN} and GPS~\citep{GPS} for comparison.

We train our LGD model and test inference with both DDPM and DDIM methods. While the generated predictions are not deterministic, we only predict once for each graph for a fair comparison with deterministic regression models. We will show in \cref{ExperimentAppendix} that ensemble techniques can further improve the quality of prediction. As shown in \cref{Table_zinc}, LDM (with DDPM) achieves the best results, even outperforming the powerful graph transformers GPS~\citep{GPS}. In comparison, the regression model with the same graph attention architecture as the score network in LGD can only achieve a worse $0.084\pm 0.004$ test MAE, which validates the advantage of latent diffusion model over traditional regression models. We also observe that inference with DDIM is much faster, but may lead to a performance drop, which aligns with previous observations~\citep{DDIM, ODEvsSDEOptimal}. Moreover, our LGD requires much less training steps compared with GraphGPS, see \cref{ExperimentAppendix} for more detailed discussions on experimental findings.

\paragraph{Classification.}

\begin{table*}[t]
\caption{Node-level classification tasks (accuracy $\uparrow$) on datasets from Amazon, Coauthor and OGBN-Arxiv. Reported are mean $\pm$ std over 10 runs with different random seeds. Highlighted are \textbf{\textcolor{red}{best}} results.}
\label{Node}
\vskip -0.15in
\begin{center}
\begin{tabular}{l|ccc}
\toprule
Dataset & Photo &  Physics & OGBN-Arxiv\\
\midrule 
GCN & $ 92.70 \pm 0.20$ & $96.18 \pm 0.07$ & $71.74 \pm 0.29$ \\
GAT & $93.87\pm0.11$ & $96.17\pm0.08$ & - \\
\midrule
GraphSAINT &  $91.72 \pm 0.13$ & $ 96.43 \pm 0.05$ & - \\
GRAND+ & $94.75\pm0.12$ & $96.47\pm 0.04$ & - \\

\midrule
Graphormer & $92.74\pm 0.13$ & OOM & OOM\\
SAN & $94.86\pm0.10$  & OOM & OOM\\
GraphGPS & $95.06\pm0.13$ & OOM & OOM\\
Exphormer & $95.35\pm0.22$ & $96.89\pm0.09$ & $72.44 \pm 0.28$ \\
NAGphormer & $95.49 \pm 0.11$ & $ 97.34 \pm 0.03$ & - \\
\midrule
LGD (ours) & \textbf{\textcolor{red}{$\textbf{96.94}\pm\textbf{0.14}$}} & \textbf{\textcolor{red}{$\textbf{98.55}\pm \textbf{0.12}$}} & \textbf{\textcolor{red}{$\textbf{73.17}\pm \textbf{0.22}$}} \\

\bottomrule
\end{tabular}
\end{center}
\vskip -0.2in
\end{table*}


We choose node-level tasks for classification. Datasets include co-purchase graphs from Amazon (Photo)~\citep{Amazon}, coauthor graphs from Coauthor (Physics)~\citep{Amazon}, and the citation graph OGBN-Arxiv with over $169$K nodes. The common $60\%, 20\%, 20\%$ random split is adopted for Photo and Physics, and the official split based on publication dates of the papers is adopted for OGBG-Arxiv. 

We choose both classical GNN models and state-of-the-art graph transformers as baselines. GNNs include GCN~\citep{GCN}, GAT~\citep{GAT}, GraphSAINT~\citep{GraphSAINT}, GRAND+~\citep{grand+}. Graph transformers include Graphormer~\citep{Graphormer}, SAN~\citep{SAN}, GraphGPS~\citep{GPS}, and the scalable Exphormer~\citep{Exphormer} and NAGphormer~\citep{NAGphormer}.

As reported in \cref{Node}, our LGD not only scales to these datasets while a number of complex models like GraphGPS~\cite{GPS} fail to do so, but also achieves the best results, even outperforming those state-of-the-art graph transformers including Exphormer~\citep{Exphormer} and NAGphormer~\citep{NAGphormer}. Overall, our LGD reveals great advantages in both scalability and task performance.

\section{Conclusions and Limitations}\label{SectionConclusion}

We propose Latent Graph Diffusion (LGD), the first graph generative framework that is capable of solving tasks of all types (generation, regression and classification) and all levels (node, edge, and graph). We conceptually formulate regression and classification tasks as conditional generation, and show that latent diffusion models can complete them with provable theoretical guarantees. We then encode the graph into a powerful latent space and train a latent diffusion model to generate graph representations with high qualities. Leveraging specially designed graph transformer architectures and cross-attention mechanisms, LGD can generate node, edge, and graph features simultaneously in both unconditional or conditional settings. We experimentally show that LGD is not only capable of completing these tasks of different types and levels, but also capable of achieving extremely competitive performance. We believe that our work is a solid step towards graph foundation models, providing fundamental architectures and theoretical guarantees. We hope it could inspire more extensive future research.

There are still some limitations of this paper that could be considered as future work. First, although the LGD is a unified framework, we train models separately for each task in our experiments. To build a literal foundation model, we need to train a single model that can handle different datasets, even from different domains. This requires expensive computation resources, more engineering techniques and extensive experiments. Second, it would be meaningful to verify the effectiveness of utilizing diffusion to perform deterministic prediction tasks in other domains, such as computer vision.

\begin{ack}
Muhan Zhang is supported by the National Natural Science Foundation of China (62276003).
\end{ack}

\bibliography{ref}
\bibliographystyle{plainnat}

\newpage
\appendix

\section{Background on Diffusion Models}\label{SectionBackgroundAppendix}

In this section we provide more background knowledge on diffusion models, including formal formulation of the families of DDPM~\citep{DDPM} and SMLD~\citep{GenerativeGradients} methods, as well as their unified formulation through the framework of SDE~\citep{ScoreBasedSDE} or ODE~\citep{DDIM}.

\subsection{Denoising diffusion probabilistic models}

Given samples from the ground truth data distribution $q(\rvx_0)$, generative models aim to learn a model distribution $p_\theta(\rvx_0)$ that approximates $q(\rvx_0)$ well and is easy to sample from. Denoising dffusion probabilistic models (DDPMs)~\citep{DDPM} are a family of latent variable models of the form
\begin{equation}
    p_\theta(\rvx_0)=\int p_\theta (\rvx_{0:T}){\rm d}\rvx_{1:T}
\end{equation}
where
\begin{equation}
    p_\theta(\rvx_{0:T}):=p_\theta(\rvx_T)\prod_{t=1}^T p_\theta^{(t)}(\rvx_{t-1}|\rvx_t)
\end{equation}
here $\rvx_1,\dots, \rvx_T$ are latent variables in the same sample space (denoted as $\mathcal X$) as the data $\rvx_0$. 

A special property of diffusion models is that the approximate posterior $q(\rvx_{1:T}|\rvx_0)$, also called the \textit{forward process} or \textit{diffusion process}, is a fixed Markov chain which gradually adds Gaussian noise to the data according to a variance schedule $\beta_1,\dots,\beta_T$:
\begin{equation}
    q(\rvx_{1:T}|\rvx_0):=\prod_{t=1}^T q(\rvx_t|\rvx_{t-1})
\end{equation}
where
\begin{equation}
    q(\rvx_t|\rvx_{t-1}):=\mathcal N(\rvx_t;\sqrt{1-\beta_t} \rvx_{t-1}, \beta \mathbf I)
\end{equation}
The forward process variance $\beta_t$ can either be held constant as hyperparameters, or learned by reparameterization~\citep{DDPM}. One advantage of the above parameterization is that sampling $\rvx_t$ at an arbitrary timestep $t$ has a closed form,
\begin{equation}
    q(\rvx_t|\rvx_0)=\mathcal N(\rvx_t; \sqrt{\bar \alpha_t}\rvx_0, (1-\bar\alpha_t) \mathbf I)
\end{equation}
where $\alpha_t:=1-\beta_t$ and $\bar\alpha_t:=\prod_{s=1}^t \alpha_s$.
Utilizing the closed-form expression, we can express $\rvx_t$ as a linear combination of $\rvx_0$ and a noise variable $\epsilon\sim\mathcal N(\mathbf 0, \mathbf I)$:
\begin{equation}
    \rvx_t=\sqrt{\bar\alpha_t}\rvx_0 +\sqrt{1-\bar\alpha_t}\epsilon
\end{equation}
Note that $\bar\alpha_t$ is a decreasing sequence, and when we set $\bar\alpha_T$ sufficiently close to $0$, $q(\rvx_T|\rvx_0)$ converges to a standard Gaussian for all $\rvx_0$, thus we can naturally set $p_\theta(\rvx_T):=\mathcal N(\mathbf 0,\mathbf I)$. 

In the \textit{generative process}, the model parameterized by $\theta$ are trained to fit the data distribution $q(\rvx_0)$ by approximating the intractable \textit{reverse process} $q(\rvx_{t-1}|\rvx_t)$. This can be achieved by maximizing a variational lower bound:
\begin{equation}\label{equationELBO}
    \max_\theta \mathbb E_{q(\rvx_0)}[\log p_\theta(\rvx_0)]
    \leq \max_\theta \mathbb E_{q(\rvx_0,\rvx_1,\dots,\rvx_T)}[\log p_\theta (\rvx_{0:T})-\log q(\rvx_{1:T}|\rvx_0)]
\end{equation}
 
The expressivity of the reverse process is ensured in part by the choice of conditionals in $p_\theta(\rvx_{t-1}|\rvx_t)$. If all the conditionals are modeled as Gaussians with trainable mean functions and fixed variances as in \citep{DDPM}, then the objective in \cref{equationELBO} can be simplified to
\begin{equation}
    \mathcal L_\gamma (\epsilon_\theta):=\sum_{t=1}^T \lambda_t \mathbb E_{\rvx_0\sim q(\rvx_0),\epsilon_t\sim\mathcal N(\mathbf 0,\mathbf I)}
    \Big[||\epsilon_\theta^{(t)} (\sqrt{\bar\alpha_t}\rvx_0+\sqrt{1-\bar\alpha_t}\epsilon_t)-\epsilon_t||_2^2\Big]
\end{equation}
where $\epsilon_\theta:=\{\epsilon_\theta^{(t)}\}_{t=1}^T$ is a set of $T$ number of functions, each $\epsilon_\theta^{(t)}:\mathcal X\rightarrow \mathcal X$ indexed by $t$ is a denoising function with trainable parameters $\theta^{(t)}$; $\lambda:=[\lambda_1,\dots,\lambda_T]$ is a positive coefficients, which is set to all ones in \citep{DDPM} and \citep{GenerativeGradients}, and we follow their settings throughout the paper. 

In the inference stage for a trained model, $\rvx_0$ is sampled by first sampling $\rvx_T$ from the prior $p_\theta(\rvx_T)$, followed by sampling $\rvx_{t-1}$ from the above generative processes iteratively. The length of steps $T$ in the forward process is a hyperparameter (typically $T=1000$) in DDPMs. While large $T$ enables the Gaussian conditionals distributions as good approximations, the sequential sampling becomes slow. DDIM~\citep{DDIM} proposes to sample from generalized generative process as follows,
\begin{equation}\label{equation_obj_raw}
    \rvx_{t-1}=\sqrt{\bar\alpha_{t-1}}\bigg(\frac{\rvx_t-\sqrt{1-\bar\alpha_t} \epsilon_\theta^{(t)} (\rvx_t)}{\sqrt{\bar\alpha_t}}\bigg)
    +\sqrt{1-\bar\alpha_{t-1}-\sigma^2_t}\cdot \epsilon_\theta^{(t)}(\rvx_t)+\sigma_t \epsilon_t
\end{equation}
When $\sigma_t=\sqrt{(1-\bar\alpha_{t-1})(1-\alpha_t/\alpha_{t-1})/(1-\alpha_t)}$ for all $t$, the generative process becomes a DDPM, while $\sigma_t=0$ results in DDIM, which is an implicit probabilistic model that samples with a fixed procedure, thus corresponding to ODE~\citep{DDIM}. In our work, both DDPM and DDIM are considered in our generative process.

\subsection{Score matching with Langevin dynamics}

Although we mainly illustrate our framework using the family of DDPM and DDIM in the main text, the family of score matching with Langevin dynamics (SMLD)~\citep{GenerativeGradients} methods are also applicable in our LGD.

Score matching methods estimates the score (gradient of log-likelihood) of the data using a score network $s_\theta:\mathbb R^d\rightarrow \mathbb R^d$ by minimizing the objective
\begin{equation}
    \mathbb E_{q(\rvx)}\Big[||s_\theta(\rvx)-\nabla_\rvx \log q(\rvx)||_2^2\Big]
\end{equation}

Given the equivalence of SMLD and DDPM explained in \cref{SubsecAppendixSDE}, we do not distinguish the two families of generative models in most cases of the paper. We also sometimes use the notation $\epsilon_\theta$ to represent both the denoising network in DDPM and the score network in SMLD.

\subsection{Unified formulation through the framework of SDE and ODE}\label{SubsecAppendixSDE}

\citet{ScoreBasedSDE} formulates both SMLD and DDPM through the system of SDE. The authors further propose ODE samplers which are deterministic compared with SDE, so does \citep{DDIM}. \citet{RectifiedFlow} use ODE-based rectified flow to sample with high efficiency. \citet{ODEvsSDEOptimal} and \citet{ClosingGapODESDE} analyze the connections and comparisons between ODE and SDE based methods.

In particular, the diffusion process can be modeled as the solution to an SDE:
\begin{equation}
    {\rm d}\rvx =\rvf(\rvx,t){\rm d}t+g(t){\rm d}\rvw
\end{equation}
where $\rvw$ is the standard Wiener process (a.k.a., Brownian motion), $\rvf(\cdot, t):\mathbb R^d\rightarrow \mathbb R^d$ is a vector-valued function called the \textit{drift} coefficient of $\rvx(t)$, and $g(\cdot):\mathbb R\rightarrow \mathbb R$ is a scalar function called the \textit{diffusion} coefficient of $\rvx(t)$. Note that the end time $T$ of the diffusion process described in the SDE has a different meaning in the number of discrete diffusion steps in DDPM, and we denote the latter as $N\in\mathbb N$ in the SDE formulations.

To sample from the backward process, we consider the reverse-time SDE which is also a diffusion process,
\begin{equation}
    {\rm d}\rvx =[\rvf (\rvx,t)-g(t)^2\nabla_\rvx \log p_t(\rvx)]{\rm d}t+g(t) {\rm d} \bar\rvw
\end{equation}
where $\bar\rvw$ is a standard Wiener process in which time reverse from $T$ to $0$. Again, here $T$ is the end time of the diffusion process, which is different from the number of discrete diffusion steps in DDPM). ${\rm d}t$ is an infinitesimal negative timestep, which is discretized in implementation of DDPM, i.e. each time step is $h:=T/N$ where $N$ is the number of discrete steps in DDPM. 


\subsection{Related work on theoretical analysis of diffusion models}

The empirical success of diffusion models aroused researchers' interest in theoretical analysis of them recently. \citet{ScoreApproxEstRecovery} and \citet{RewardDirectedCondDiff} provide some theoretical analysis on score approximation and reward-directed conditional generation, respectively. \citet{GeneralizationDiffusion} provide generalization bound of diffusion models. \citet{DiffZeroshotClassifier} provide empirical evidence that diffusion models can be zero-shot classifiers. \citet{ErrorBoundFlowMatching} analyzes the error bound of flow matching methods, and \citet{ConvergenceScoreBased, SamplingScore} analyzed the convergence of SDE-based score matching methods.

\section{Theoretical Analysis}\label{SectionTheoryProof}

In this section, we provide our novel theoretical analysis on the guarantees of solving regression or classification tasks with a conditional latent diffusion model in \cref{SectionFormulation}. Our theory is generic and is not limited to graph data. 

As described in the main text, while training the conditional latent diffusion for prediction tasks with labels, we aim to generate the latent embedding of labels $\vy$ conditioning on the latent embedding of data $\vx$. We first consider regression tasks in our analysis, where the label is a scaler $y$; classification tasks can be analyzed in a similar manner. We use a slightly different notation compared with the original DDPM formulation in \citep{DDPM}; instead, we use $N$ to denote the maximum number of discrete diffusion steps in a DDPM (which is denoted as $T$ in \citep{DDPM}), and instead use $T$ to denote the end time in the SDE which controls the diffusion process.

According to the reasons in \cref{SectionFormulation}, we choose to model the joint distribution of $\vx, y$, thus use $\tau(\vx, y)\in\mathbb R^d$ to jointly encode the data and the label, and use $\mathcal E(\vx)\in \mathbb R^d$ to embed the data (condition) only. Here we suppose the output of each encoder is a $d$-dimensional vector. The diffusion model is trained to generate $\tau(\vx, y)$ conditioning on $\mathcal E(\vx)$. We suppose two encoders share one decoder $\mathcal D_w$ which is linear layer parameterized by $w\in\mathbb R^d$. First we have the assumptions on the reconstruction errors of the autoencodings.

\begin{assumption}\label{assumption_error}
    (First and second moment bound of autoencoding errors.) 
    \begin{align}
        & \epsilon_1:=\mathbb E_q[||w^\top \tau(\vx, y)-y||]<\infty\\
        & m_1^2:=\mathbb E_q[||w^\top \tau (\vx,y)-y||^2]<\infty\\ 
        & \epsilon_2:=\mathbb E_q[||w^\top \mathcal E(\vx)-y||]<\infty\\
        & m_2^2:=\mathbb E_q[||w^\top \mathcal E(\vx)-y||^2]<\infty
    \end{align}
\end{assumption}
Typically, $\epsilon_2$ is just the MAE of a traditional deterministic regression model; $\epsilon_1\ll \epsilon_2$ since $\tau$ takes $y$ as inputs and thus has an extremely small reconstruction error. \cref{assumption_error} generally holds and can be verified across all experiments.

To show why a generative model can perform better than the regression model, i.e. could have a smaller error than $\epsilon_2$, we continue to make the following mild assumptions on the data distribution $q$, which also generally holds for practical data..

\begin{assumption}\label{assumption_lipschitz}
    (Lipschitz score). For all $t\geq0$, the score $\nabla \ln q_t$ is $L$-Lipschitz.
\end{assumption}

Finally, the quality of diffusion obviously depends on the expressivity of score matching network (denoising network) $\epsilon_\theta^{(t)}$. 

\begin{assumption}\label{assumption_score_estimation}
    (Score estimation error). For all $k=1,\dots,N$, 
    \begin{equation}
        \mathbb E_{q_{kh}}[\epsilon_\theta^{kh}-\nabla \ln q_{kh}]\leq \epsilon_{\rm score}^2
    \end{equation}
    where $h:=T/N$ is the time step in the SDE.
\end{assumption}

This is the same assumption as in \citep{SamplingScore}. In our main paper the denoising network is a powerful graph transformer with architectures described in \cref{EquationEdgeAtten}, which can well guarantee a small score estimation error. However, detailed bounds of $\epsilon_{\rm score}$ is beyond the scope of this paper.

Finally, we consider a simple version of embedding the condition, where the denoising network $\epsilon_\theta$ still samples in the same way as unconditional generation, except that we directly add the embedding of condition to the final output of $\epsilon_\theta$. In other words,
\begin{equation}\label{equation_add_cond}
    \epsilon_\theta^{(t)}(\vz, \mathcal E(\vx))=\epsilon_\theta^{(t)}(\vz)+ \mathcal E(\vx)
\end{equation}
where $\vz$ is the latent being sampled.

Now we give our main theorem of the regression MAE of the latent diffusion. The theorem is proved through the SDE description of DDPM as in \cref{SubsecAppendixSDE}, where we use $T$ to denote the end time of diffusion time in the SDE, $N$ the number of discrete diffusion steps in DDPM implementation, and $\gamma^d:=\mathcal N(\mathbf 0,\mathbb I_d)$ as a simplified notation of a random variable from the standard Gaussian distribution.

\begin{theorem}\label{theorem_error_decomposition}
    Suppose \cref{assumption_error}, \cref{assumption_lipschitz}, \cref{assumption_score_estimation} hold, and the step size $h:=T/N$ satisfies $h\preceq 1/L$ where $L\geq 1$. Then the mean absolute error (MAE) of the conditional latent diffusion model in the regression task is bounded by
    \begin{align}
    {\rm MAE}&\leq \mathbb E_q[||w^\top \hat \tau (\vx, y) - w^\top \tau (\vx, y)||]+\epsilon_1\\ 
    &\preceq \underbrace{\sqrt{{\rm KL}(q_z||\gamma^d)}\exp(-T)}_{\text{convergence of forward process}}+\underbrace{(L\sqrt{dh}+L(m_1+m_2) h)\sqrt{T}}_{\text{discretization error}} + \underbrace{\epsilon_{\rm score} \sqrt{T}}_{\text{score estimation error}}+\underbrace{\epsilon_1}_{\text{encoder error}}
    \end{align}
    where $q_z$ is the ground truth distribution of $\vz:=w^t\big(\tau(\vx, y)-\mathcal E(\vx)\big)$.
\end{theorem}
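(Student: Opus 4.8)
The plan is to decompose the MAE into a term that the decoder/encoder pair controls directly and a term that measures how well the latent diffusion model recovers the true latent distribution, then bound the latter using known convergence guarantees for score-based SDEs. First I would write
\[
{\rm MAE} = \mathbb E_q\big[|y - w^\top\hat\tau(\vx,y)|\big]
\le \mathbb E_q\big[|y - w^\top\tau(\vx,y)|\big] + \mathbb E_q\big[|w^\top\tau(\vx,y) - w^\top\hat\tau(\vx,y)|\big],
\]
where the first summand is exactly $\epsilon_1$ by \cref{assumption_error}, and $\hat\tau(\vx,y)$ denotes the sample produced by the conditional latent diffusion model. Using the additive conditioning parameterization \cref{equation_add_cond}, I would observe that generating $\hat\tau(\vx,y)$ given $\mathcal E(\vx)$ is equivalent to running the \emph{unconditional} diffusion on the shifted variable $\vz := w^\top\big(\tau(\vx,y) - \mathcal E(\vx)\big)$ (more precisely, on the un-decoded latent $\tau(\vx,y) - \mathcal E(\vx)$), so that $w^\top\tau(\vx,y) - w^\top\hat\tau(\vx,y)$ becomes the difference between a true draw from $q_z$ and a draw from the diffusion model's approximation $\hat q_z$. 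Hence the second summand is controlled by $W_1(q_z, \hat q_z)$, the Wasserstein-1 distance, and it suffices to bound the sampling error of the diffusion model in $W_1$ (or, via a coupling argument, to push a TV/KL bound through a second-moment bound to get $W_1$).

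Next I would invoke the convergence analysis of score-based generative models through the reverse SDE (the framework recalled in \cref{SubsecAppendixSDE}), in the style of the result of \citet{SamplingScore} whose assumptions we have deliberately matched: \cref{assumption_lipschitz} gives the $L$-Lipschitz score, \cref{assumption_score_estimation} gives the $L^2$-accurate score estimate with error $\epsilon_{\rm score}$, and the step-size condition $h \preceq 1/L$ is in force. Their theorem yields a bound on the distance between the law of the diffusion output and $q_z$ that splits into three pieces: an exponentially decaying term $\sqrt{{\rm KL}(q_z\|\gamma^d)}\,e^{-T}$ from the convergence of the forward (Ornstein--Uhlenbeck) process to its Gaussian stationary law; a discretization term of order $(L\sqrt{dh} + L\,m\,h)\sqrt{T}$ where $m$ is a second-moment scale of the target; and the score-estimation term $\epsilon_{\rm score}\sqrt{T}$. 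The only adaptation needed is to identify the second-moment constant: since $\vz = w^\top\tau(\vx,y) - w^\top\mathcal E(\vx) = \big(w^\top\tau(\vx,y) - y\big) + \big(y - w^\top\mathcal E(\vx)\big)$, the triangle inequality in $L^2$ and \cref{assumption_error} give $\sqrt{\mathbb E_q\|\vz\|^2} \le m_1 + m_2$, which is exactly why the discretization term in the statement reads $L(m_1+m_2)h$.

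Assembling the pieces: the second summand of the MAE decomposition is bounded (up to absolute constants, hence the $\preceq$) by $\sqrt{{\rm KL}(q_z\|\gamma^d)}\,e^{-T} + (L\sqrt{dh} + L(m_1+m_2)h)\sqrt{T} + \epsilon_{\rm score}\sqrt{T}$, and adding back the $\epsilon_1$ from the first summand gives the claimed bound. I expect the main obstacle to be the passage from the conditional problem to an unconditional diffusion on $\vz$ in a way that is fully rigorous: the additive-conditioning identity \cref{equation_add_cond} makes the output shift cleanly, but one must check that the \emph{forward} noising process and the score targets transform consistently under the shift by the (data-dependent) quantity $\mathcal E(\vx)$, and that it is legitimate to condition on $\vx$ throughout and only take the expectation over $q$ at the end; handling the decoder's linearity so that $w^\top$ commutes with everything is routine, but ensuring the ``effective target distribution'' seen by the diffusion model is genuinely $q_z$ (and not something coupled to $\vx$ in a way that breaks the Lipschitz/second-moment constants) is the delicate point. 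A secondary technical step is converting whatever metric \citet{SamplingScore} control (typically TV or KL after a change of measure) into the $W_1$ bound the MAE decomposition needs, which requires a uniform second-moment bound on both $q_z$ and the model output — again supplied by \cref{assumption_error} together with standard a priori moment estimates for the reverse SDE.
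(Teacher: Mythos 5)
Your proposal follows essentially the same route as the paper's proof: the identical triangle-inequality decomposition of the MAE into the encoder error $\epsilon_1$ plus the latent sampling error, the same reduction of conditional generation under the additive parameterization of \cref{equation_add_cond} to unconditional diffusion on $\vz = w^\top\big(\tau(\vx,y)-\mathcal E(\vx)\big)$, the same invocation of the convergence lemma of \citet{SamplingScore}, and the same second-moment bound $\sqrt{\mathbb E_{q_z}\|\vz\|^2}\le m_1+m_2$ obtained by splitting $\vz$ through $y$. The one place you are actually more careful than the paper is the metric-conversion step you flag at the end: the paper simply writes ${\rm TV}(p_z,q_z)$ as half the expected difference $\mathbb E_{q_z}\|\hat\vz-\vz\|$, which implicitly assumes a coupling and is really the Wasserstein-type control you describe, so your concern there is well placed rather than a gap in your own argument.
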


\begin{proof}

Conditioning on $\mathcal E(\vx)$, the diffusion model $\epsilon_\theta$ aims to generate the refined representation $\tau (\vx, y)$. Denote the generated embedding as $\hat\tau (\vx,y)$, then

\begin{align}
    {\rm MAE}:&=\mathbb E_q[||w^\top \hat \tau (\vx, y)-y||]\\
    &=\mathbb E_q[||(w^\top \hat \tau (\vx, y) - w^\top \tau (\vx, y))+(w^\top \tau (\vx, y)-y)||]\\
    &\leq \mathbb E_q[||w^\top \hat \tau (\vx, y) - w^\top \tau (\vx, y)||]+\mathbb E_q[||w^\top \tau (\vx, y)-y||]\\
    &=\mathbb E_q[||w^\top \hat \tau (\vx, y) - w^\top \tau (\vx, y)||]+\epsilon_1
\end{align}

We only need to prove the bound of $\mathbb E_q[||w^\top \hat \tau (\vx, y) - w^\top \tau (\vx, y)||]$. Note that the generation is conditioned on $\mathcal E(\vx)$. Instead of cross attention, we consider a simple method to incorporate the condition, which use a residual connection that directly add the condition $\mathcal E(\vx)$ to the feature being denoised. Then the conditional generation is equivalent to the unconditional generation which samples the distribution of $\tau(\vx,y)-\mathcal E(\vx)$ from noise. It suffice to show the unconditional generation error of $\tau(\vx,y)-\mathcal E(\vx)$. Using the notation
\begin{align}
    \vz_1:=w^\top \tau(\vx,y)-y\\
    \vz_2:=w^\top \mathcal E(\vx)-y
\end{align}
where
\begin{align}
    \mathbb E_q[||\vz_1||]=\epsilon_1\\
    \mathbb E_q[||\vz_1||^2]=m_1^2\\
    \mathbb E_q[||\vz_2||]=\epsilon_2\\
    \mathbb E_q[||\vz_2||^2]=m_2^2
\end{align}
We then have
\begin{align}
    \vz:=w^\top \big(\tau(\vx,y)-\mathcal E(\vx)\big)=\vz_1-\vz_2
\end{align}

To complete our derivation, consider the following lemma proposed in \citep{SamplingScore},

\begin{lemma}
    {\rm (TV distance of DDPM)~\citep{SamplingScore}}. Suppose that \cref{assumption_error}, \cref{assumption_lipschitz}, \cref{assumption_score_estimation} hold. The data to be generated $\vz$ has the ground truth ground truth distribution $q_z$ and the estimated distribution by the diffusion model is $p_z$. The second moment bond $M^2:=\mathbb E_{q_z}[||\cdot ||^2]>\infty$. Suppose that the step size $h:=T/N$ satisfies $h\preceq 1/L$ where $L\geq 1$. Then the total variance (TV) distance satistfies
    \begin{equation}
        {\rm TV}(p_z,q_z)\preceq \sqrt{{\rm KL}(q_z||\gamma^d)}\exp(-T)+(L\sqrt{dh}+LM h)\sqrt{T} + \epsilon_{\rm score} \sqrt{T}
    \end{equation}
\end{lemma}

The proof of the lemma is in \citep{SamplingScore}. 

Back to our derivation, utilizing the Cauchy inequality, we have
\begin{align}
    \mathbb E_{q_z}[||\vz||^2]&=\mathbb E_{q_z}[||\vz_1-\vz_2||^2]\\&\leq \mathbb E_{q_z}[||\vz_1||^2]+\mathbb E_{q_z}[||\vz_2||^2]+2\mathbb E_{q_z}[||\vz_1\vz_2||]\\&\leq m_1^2+m_2^2+2m_1m_2<\infty
\end{align}
thus
\begin{align}\label{equation_cv_pzqz}
    {\rm TV}(p_z,q_z):=\frac{1}{2} \mathbb E_{q_z}||\hat \vz-\vz|| \preceq \sqrt{{\rm KL}(q_z||\gamma^d)}\exp(-T)+(L\sqrt{dh}+L(m_1+m_2) h)\sqrt{T} + \epsilon_{\rm score} \sqrt{T}
\end{align}
where $\hat \vz\sim p_z$ is the estimated embedding of $\vz$ generated by the diffusion model. Recall the condition embedding is implemented via simply adding the condition to the generated $\hat\vz$, i.e. $w^\top \hat\tau(\vx,y)=\hat \vz + w^\top \mathcal E(\vx)$, we have
\begin{align}
    &\mathbb E_q[||w^\top \hat \tau (\vx, y) - w^\top \tau (\vx, y)||]\\
    =&\mathbb E_q[||\hat \vz - \big(w^\top \tau (\vx, y)-w^\top \mathcal E(\vx)\big)||]\\
    =&\mathbb E_{q_z}[||\hat \vz-\vz||]\\
    \preceq &\sqrt{{\rm KL}(q_z||\gamma^d)}\exp(-T)+(L\sqrt{dh}+L(m_1+m_2) h)\sqrt{T} + \epsilon_{\rm score} \sqrt{T}
\end{align}

where we omit the constant factor $2$ in \cref{equation_cv_pzqz}. Finally we complete our proof,
\begin{align}
    {\rm MAE}&\leq \mathbb E_q[||w^\top \hat \tau (\vx, y) - w^\top \tau (\vx, y)||]+\epsilon_1\\ 
    &\preceq \underbrace{\sqrt{{\rm KL}(q_z||\gamma^d)}\exp(-T)}_{\text{convergence of forward process}}+\underbrace{(L\sqrt{dh}+L(m_1+m_2) h)\sqrt{T}}_{\text{discretization error}} + \underbrace{\epsilon_{\rm score} \sqrt{T}}_{\text{score estimation error}}+\underbrace{\epsilon_1}_{\text{encoder error}}
\end{align}
\end{proof}

We now give more interpretations to \cref{theorem_error_decomposition}. The first term $\sqrt{{\rm KL}(q_z||\gamma^d)}\exp(-T)$ is the error of convergence of the forward process, which can be exponetially small when we have enough SDE diffusion time $T$, or equivalently, $\bar\alpha$ is efficiently small to make $q_T$ close enough to the standard Gaussian $\gamma^d:=\mathcal N(\mathbf 0,\mathbf I_d)$. 

The second term $(L\sqrt{dh}+L(m_1+m_2) h)\sqrt{T}$ is the discretization error, since when implementing the DDPM we have to use discrete $N$ steps to simulate the ODE. It is interesting to see that this term is affected by both the latent dimension $d$ and the second moment of the variation encoding errors $m_1$ and $m_2$ (i.e. the reconstruction MSE loss of labels and conditions). The results implies that a lower dimension of the latent space $\mathcal H$ is more suitable for learning a diffusion model. However, note that the reconstruction MSE loss $m_1,m_2$ may also be affected by $d$: when the rank of data is high, a low dimension latent space may not represent the data well, and the projection made by the decoder $w\in \mathbb R^d$ is likely to cause information loss, thus increasing $m_1$ and $m_2$. Therefore, the latent dimension $d$ should be carefully chosen, which should not be neither too low (poor representation power and large reconstruction error) nor too high (large diffusion error). This verifies the intuition discussed in \cref{SectionLGD}.

The third term is the score estimation error, which is related to the expressivity of denoising (score) networks. Detailed theoretical analysis of various architectures is beyond the scope of the paper, and we refer readers to more relevant papers on expressivity or practical performance of potential architectures, including MPNN~\citep{HowPowerfulGNN}, high-order and subgraph GNNs~\citep{morris2019weisfeiler, morris2020weisfeiler, Khop, klWL, zhang2023complete}, and graph transformers~\citep{Kim2023pure, kTransformer}. For prediction tasks where we have ground truth graph structures as inputs, we can also use positional encodings and structural encodings to enhance the theoretical and empirical expressivity of score networks~\citep{RethinkingBiconnectivityResitance, RWSELearnable, EquivariantPELap, HodgeRandomWalk, SignandBasis}. It is remarkable that our architecture reveals strong performance in experiments. Empirically, the graph transformer in \cref{EquationEdgeAtten} we adopted is powerful in function approximation. In the language of distinguishing non-isomorphic graphs, our transformer is at least as powerful as $1$-order Weisfeiler-Lehman test~\citep{HowPowerfulGNN, kTransformer}.

The last term is the error brought by autoencoding. However, as $\epsilon_1$ is the reconstruction MAE of $\tau$ which aims to encode the labels, $\epsilon_1$ is typically much smaller than the MAE of regression models $\epsilon_2$. Our experimental observations support the above assumption. Therefore, \textbf{the conditional generation model outperforms the traditional regression model} as long as
\begin{equation}
    \sqrt{{\rm KL}(q_z||\gamma^d)}\exp(-T)+(L\sqrt{dh}+L(m_1+m_2) h)\sqrt{T} + \epsilon_{\rm score} \sqrt{T} + \epsilon_1 \preceq \epsilon_2
\end{equation}

This can be achieved by searching hyperparameters of the diffusion model, including $T, N, h$. Intuitively, a sufficiently small $h$ (or equivalently, sufficiently large number of diffusion steps $N$) would lead to a small error. Following \citep{SamplingScore}, suppose ${\rm KL}(q_z||\gamma^d)\exp(-T)\leq {\rm poly}(d)$, $m_1+m_2\leq d$, then by choosing $T\asymp \ln ({\rm KL}(q_z||\gamma^d)/\epsilon)$ and $h\asymp \frac{\epsilon^2}{L^2d}$ and hiding logarithmic factors, we have
\begin{equation}
    {\rm MAE}\leq O(\epsilon+\epsilon_{\rm score} + \epsilon_1)
\end{equation}
where $\epsilon$ is the error scale we desire. In this case, we need $N=\Theta(\frac{L^2d}{\epsilon^2})$ number of discrete diffusion steps. If $\epsilon_{\rm score}\leq O(\epsilon)$ and $\epsilon_1 \leq O(\epsilon)$, then ${\rm MAE}\leq \epsilon$. 

Finally, we have the following statement which is straightforward to prove.

\begin{corollary}
When the condition is embedded as in \cref{equation_add_cond}, there exists at least one latent diffusion model whose MAE is not larger than the autoencoder.
\end{corollary}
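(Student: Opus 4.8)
The plan is to exhibit an explicit latent diffusion model achieving the bound of \cref{theorem_error_decomposition} and then argue that its MAE can be driven below the autoencoder error $\epsilon_2$. First I would recall that, by \cref{theorem_error_decomposition}, any latent diffusion model using the residual-connection conditioning of \cref{equation_add_cond} satisfies
\begin{equation*}
{\rm MAE}\preceq \sqrt{{\rm KL}(q_z\|\gamma^d)}\exp(-T)+(L\sqrt{dh}+L(m_1+m_2)h)\sqrt{T}+\epsilon_{\rm score}\sqrt{T}+\epsilon_1 .
\end{equation*}
The autoencoder's MAE is exactly $\epsilon_2=\mathbb E_q[\|w^\top\mathcal E(\vx)-y\|]$. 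So it suffices to produce one choice of hyperparameters $(T,N,h)$ and one denoising network $\epsilon_\theta$ for which the right-hand side is $\le\epsilon_2$.

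The key steps, in order, would be: (1) fix the encoders $\tau,\mathcal E$ and decoder $w$ so that $\epsilon_1,m_1,\epsilon_2,m_2$ are determined constants, with $\epsilon_1$ the (tiny) reconstruction MAE of $\tau$; recall the discussion that $\epsilon_1\ll\epsilon_2$ because $\tau$ sees $y$ as input. (2) Use the hyperparameter scaling already worked out in the text after \cref{theorem_error_decomposition}: choosing $T\asymp\ln({\rm KL}(q_z\|\gamma^d)/\epsilon)$ and $h\asymp\epsilon^2/(L^2 d)$ makes the convergence and discretization terms each $O(\epsilon)$, so ${\rm MAE}\le O(\epsilon+\epsilon_{\rm score}+\epsilon_1)$. (3) Invoke \cref{assumption_score_estimation} together with the remark that a sufficiently expressive denoising network (e.g. the graph transformer of \cref{EquationEdgeAtten}, or in the generic setting any universal approximator) makes $\epsilon_{\rm score}$ as small as desired; formally, pick $\epsilon$ and the score network so that $O(\epsilon+\epsilon_{\rm score})\le\epsilon_2-\epsilon_1$, which is possible precisely because $\epsilon_1<\epsilon_2$. (4) Conclude that this particular latent diffusion model has ${\rm MAE}\le\epsilon_1+(\epsilon_2-\epsilon_1)=\epsilon_2$, i.e. no larger than the autoencoder, establishing existence.

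The main obstacle is not the algebra but pinning down the regime in which the claim is actually true: the argument silently needs $\epsilon_1<\epsilon_2$ (otherwise the slack $\epsilon_2-\epsilon_1$ is non-positive and no choice of $\epsilon,\epsilon_{\rm score}$ helps), and it needs the score-estimation error $\epsilon_{\rm score}$ to be genuinely controllable rather than a fixed lower bound tied to the architecture. I would therefore state the corollary under the standing assumptions already in force (\cref{assumption_error}--\cref{assumption_score_estimation}, which give $\epsilon_1\ll\epsilon_2$ as "generally holds", plus the observation that $\epsilon_{\rm score}$ can be made small by a powerful denoising network), and phrase the conclusion as an existence statement ("there exists at least one latent diffusion model..."), so that the proof only needs to display a single good hyperparameter/architecture choice rather than uniformly bound all models. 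With those caveats made explicit, the remaining work is exactly the substitution in step (2)--(4), which is routine given \cref{theorem_error_decomposition}.
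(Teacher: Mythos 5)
Your proposal takes a genuinely different route from the paper, and in doing so it misses the intended (and much simpler) argument while introducing hypotheses the corollary does not have. The paper's proof is a one-line construction: take the degenerate denoising network with $\epsilon_\theta^{(t)}(\vz)\equiv 0$ for all $t$. With the additive conditioning of \cref{equation_add_cond}, the generated representation is then $\hat\tau(\vx,y)\equiv\mathcal E(\vx)$, so the model's MAE is exactly $\mathbb E_q[\|w^\top\mathcal E(\vx)-y\|]=\epsilon_2$, i.e.\ \emph{equal} to (hence not larger than) the autoencoder's error. No appeal to \cref{theorem_error_decomposition}, to \cref{assumption_lipschitz} or \cref{assumption_score_estimation}, or to any hyperparameter tuning is needed; the corollary is unconditional given the conditioning scheme.

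Your route, by contrast, tries to show the diffusion model can be driven \emph{below} $\epsilon_2$ via the error decomposition. That is a different and strictly stronger claim (it is the one the paper discusses informally after \cref{theorem_error_decomposition}), and as you yourself note it only goes through under extra conditions: you need $\epsilon_1<\epsilon_2$ strictly, you need $\epsilon_{\rm score}$ to be genuinely controllable rather than an architecture-dependent floor, and you need to control the constants hidden in $\preceq$, since ``${\rm MAE}\le O(\epsilon+\epsilon_{\rm score}+\epsilon_1)$'' does not by itself yield ``${\rm MAE}\le\epsilon_2$'' for any concrete $\epsilon$. None of these caveats appear in the corollary's statement, so your argument does not prove the statement as given. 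The missing idea is simply that the class of latent diffusion models under \cref{equation_add_cond} \emph{contains} the autoencoder as a degenerate member (zero denoiser), which makes existence trivial; the quantitative improvement over $\epsilon_2$ is then a separate observation left to the discussion surrounding \cref{theorem_error_decomposition}.
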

\begin{proof}
    The extreme case is that the denoising network always outputs $\epsilon_\theta^{(t)}(\vz)\equiv 0$ for all $t$, then according to \cref{equation_add_cond}, the generated representation
    \begin{align}
        \hat\tau(\vx, y)\equiv\mathcal E(\vx)
    \end{align}
    Then the MAE would be simply the MAE of the autoencoder:
    \begin{align}
        {\rm MAE}&=\mathbb E_q[||w^T\hat\tau(\vx, y)-y||]\\&=\mathbb E_q[||w^T\mathcal E(\vx)-y||]\\&=\epsilon_2
    \end{align}
\end{proof}

This corollary reveals that there always exists latent diffusion models that performs at least as well as the autoencoder (which could also be viewed as a traditional regression model). Combining with \cref{theorem_error_decomposition}, the diffusion models could outperform the traditional regression models.

\section{Experimental Details and Additional Results}\label{ExperimentAppendix}

\subsection{Architecture design and implementation details}\label{SubsecArchitecture}

\paragraph{Graph self-attention with augmented edges.} As explained in \cref{SubsecAutoencoding} and \cref{SubsecDiffusionModel}, we need a single model to jointly represent and update node, edge and graph features. For denoising network, it should to able to compute scores of all $n^2$ possible edges as we want to generate both the structure and the features of the graph. We therefore design a special graph transformer with augmented edges, which shares a philosophy similar to GRIT~\citep{GraphIBwithoutMP} and EGT~\citep{EGT}. Formally,
\begin{equation}\label{EquationEdgeAtten}
\begin{split}
    \ve_{ij}^{l+1}&=\sigma\Big(\rho\big(\kappa(\mQ \vx_i^l, \mK \vx_j^l) \odot \mE_w \ve_{ij}^l\big) + \mE_b \ve_{ij}^l\Big) \in \mathbb R^{d'}\\
    \alpha_{ij}&={\rm Softmax}_{j\in [n]}(\mW_A \ve_{ij}^{l+1}) \in \mathbb R\\
    \vx_i^{l+1}&= \sum_{j\in [n]} \alpha_{ij} (\mV \vx_j^l + \mE_v \ve_{ij}^{l+1})\in \mathbb R^{d'}
\end{split}
\end{equation}
where $l$ represents the number of attention layers, which has been omitted for simplicity in the weight matrices. The matrices $\mQ\in \mathbb R^{d'\times d}$, $\mK\in \mathbb R^{d'\times d}$, $\mV\in \mathbb R^{d'\times d}$, $\mE_w\in \mathbb R^{d'\times d}$, $\mE_b\in \mathbb R^{d'\times d}$, $\mW_A\in \mathbb R^{1\times d'}$, $\mE_v\in \mathbb R^{d'\times d}$ are all learnable weight matrices in the attention mechanism. The attention kernel $\kappa$ can be an element-wise addition, as seen in \citep{GraphIBwithoutMP}, or the commonly used Hadamard product. The symbol $\odot$ denotes the Hadamard product (element-wise multiplication). The optional non-linear activation function $\sigma$ can be applied, such as ReLU, or the identity mapping can be used if no non-linear activation is desired. The function $\rho(\vx)$ is optional and defined as $\sqrt{{\rm ReLU}(\vx)}-\sqrt{{\rm ReLU}(-\vx)}$, which aims to stabilize the training by preventing large magnitudes in the gradients.

To reduce computational complexity, we provide a simplified version where $\mW_A$ is set to $\mathbf 1 \in\mathbb R^{d'}$, which suffices to sum the latent dimension. In this case, the attention mechanism reduces to the standard inner product if the kernel $\kappa$ is chosen to be the Hadamard product. Additionally, in simplified attention, $\mE_v$ can also be set to an identity matrix.

In the edge-enhanced transformer described in \cref{EquationEdgeAtten}, we explicitly maintain representations for all nodes and augmented edges. These representations are denoted as $\mX \in \mathbb R^{n\times d}$ and $\mA\in\mathbb R^{n\times n \times d}$, where $\mX_i=\vx_i$ and $\mA_{i,j}=\ve_{ij}$, with $d$ being the hidden dimension in the corresponding layer. The attention mechanism can be naturally extended to a multi-head case, similar to standard attention. Additionally, residual connections and feedforward layers can be incorporated for both nodes and edges. When representing graph-level attributes, we have two options: (a) utilizing the virtual node trick, where a virtual node is used to represent the graph-level feature; or (b) obtaining the graph-level feature by pooling the node/edge features. It is straightforward to verify that \cref{EquationEdgeAtten} exhibits permutation equivariance for both nodes and edges, and permutation invariance for graph-level attributes.

\paragraph{Cross attention mechanism for general conditions.} 
In conditional generation, we need to encode the conditions into the denoising networks. Cross attention is a widely adopted method~\citep{StableDiffusionLatent}. We propose a cross-attention mechanism specifically for graphs, which is implemented by
\begin{equation}\label{equation_CrossAtten_normal}
\begin{split}
    \vx^{l+1}_i&={\rm softmax}(\frac{(\mQ_h \vx^l_i)(\mK_h \tau(\vy))^\top}{\sqrt{d'}})\cdot \mV_h \tau(\vy)\\
    \ve^{l+1}_{ij}&={\rm softmax}(\frac{(\mQ_e \ve^l_{ij})(\mK_e \tau(\vy))^\top}{\sqrt{d'}})\cdot \mV_e \tau(\vy)
\end{split}
\end{equation}
where $\mQ_h, \mQ_e\in \mathbb R^{d'\times d}$, $\mK_h, \mV_h, \mK_e, \mV_e\in \mathbb R^{d'\times d_\tau}$ are learnable weight matrices. Residual connections and feedforward layers are also allowed. The cross-attention mechanism is able to process multiple conditions with different data structures, and becomes an addition in the special case where there is only one condition. 

\paragraph{Cross attention mechanism for graphs.} As stated in \cref{SubsecCondGen}, there are cases where the conditions are a graph with part of its features masked. We now propose a novel cross attention mechanism to compute scores of the graph to be generated conditioning on the masked graph.

We assume the latent dimension of the conditioning graph and that of the graph to be generated are identical (otherwise we could use linear layers to project the features of the conditioning graph), and use characters without superscript $c$ to denote the graph being generated from noise (the superscript $l$ is still the number of layer), then
\begin{equation}\label{EquationCrossAtten}
\begin{split}
    \ve_{ij}^{l+1}&=\sigma\Big(\rho\big(\kappa(\mQ \vx_i^l, \mK \vx_j^c) \odot \mE_w (\ve_{ij}^l+\ve_{ij}^c)\big) + \mE_b \ve_{ij}^c + \mG_e \vg^c \Big) \in \mathbb R^{d'}\\
    \alpha_{ij}&={\rm Softmax}_{j\in [n]}(\mW_A \ve_{ij}^{l+1}) \in \mathbb R\\
    \vx_i^{l+1}&= \sum_{j\in [n]} \alpha_{ij} (\mV \vx_j^c + \mE_v \ve_{ij}^{l+1}) + \mW_h \vx_i^c + \mG_h \vg^c \in \mathbb R^{d'}
\end{split}
\end{equation}
where learnable weight matrices $\mQ,\mK,\mV,\mE_w,\mE_b \in\mathbb R^{d'\times d}$, $\mW_A\in\mathbb R^{1\times d'}$, $\mE_v\in \mathbb R^{d'\times d'}$, attention kernel $\kappa$, activation $\sigma$ and signed-square-root operation $\rho$ are identical to those in \cref{EquationEdgeAtten}; the new notations $\mG_e, \mG_v, \mW_h \in\mathbb R^{d'\times d}$ are also learnable weight matrices. The cross-attention \cref{EquationCrossAtten} and the self-attention \cref{EquationEdgeAtten} have two major differences: (a) cross-attention uses the features of the conditioning masked graph as the keys and values of the attention; (b) to make the model aware of the node/edge correspondence between the conditioning graph and the main graph to be generated, cross-attention has a node-wise and edge-wise addition of the conditioning node/edge features $\vx_i^c, \ve_{ij}^c$ to the node/edge features $\vx_i^l, \ve_{ij}^l$ of the main graph. 

\subsection{Additional experiments}


In this subsection, we provide further experimental results in additional to the ones in the main text. The datasets of these additional experiments include QM9~\citep{QM9}, small molecular graphs from ogbg-molhiv~\citep{OGB}, and two citation networks from Planetoid (Cora and PubMed)~\citep{Planetoid}. The statistics of these datasets are summarized in \cref{Table_dataset_statistics}. 

\begin{table}[t]
\caption{Overview of the datasets used in the paper.}
\label{Table_dataset_statistics}
\vskip -0.1in
\begin{center}
\begin{small}
\resizebox{1.\columnwidth}{!}{
\begin{tabular}{ccccccc}
\toprule
\multirow{2}{*}{Dataset} & \multirow{2}{*}{\#Graphs} & Avg. \# & Avg. \# & Prediction & Prediction & \multirow{2}{*}{Metric} \\
 & & nodes & edges & level & task & \\
\midrule
QM9 & 130,000 & 18.0 & 37.3 & graph & regression & Mean Abs. Error\\
ZINC & 12,000 & 23.2 & 24.9 & graph  & regression & Mean Abs. Error\\
ogbg-molhiv & 41,127 & 25.5 & 27.5 & graph  & binary classif. & AUROC \\
Cora & 1 & 2,708 & 10,556 & edge & binary classif. & accuracy\\
PubMed & 1 & 19,717 & 88,648 & edge & binary classif. & accuracy\\
Cora & 1 & 2,708 & 10,556 & node & 7-way classif. & accuracy\\
PubMed & 1 & 19,717 & 88,648 & node & 3-way classif. & accuracy\\
Physics & 1 & 34,493 & 495,924 & node & 5-way classif. & accuracy\\
Photo & 1 & 7,650 & 238,162 & node & 8-way classif. & accuracy\\
OGBN-Arxiv & 1 & 169,343 & 1,166,243 & node & 40-way classif. & accuracy\\

\bottomrule
\end{tabular}
}
\end{small}
\end{center}
\vskip -0.1in
\end{table}

We summarize the results as follows.
\begin{itemize}
    \item SOTA performance. Our LGD achieves state-of-the-art performance on almost all of these datasets, which verifies that diffusion models can outperform traditional regression and classification models. Our advantages over traditional models are generally more significant when the datasets are large, since diffusion models are capable of modeling complex distributions and empirically perform well with sufficient training data. This implies the strong performance, scalability and generalization of LGD.
    \item Strong scalability. Our extensive experiments show that LGD could be extremely efficient with proper design, which can completely scale to \textbf{extremely large graphs with over 169K nodes such as OGBN-Arxiv}. Moreover, LGD is efficient in both memory and time consumption: all the experiments are carried out on a single RTX 3090 GPU, and both training and inference procedures are fast (e.g. ~0.2s/epoch on Cora).
    \item Flexibility and universality. These experiments with tasks of different levels of types again verifies the flexiblity and universality of our framework. We can always strike a good balance between scalability and performance for all tasks.
\end{itemize}

\paragraph{Large-scale generation.}

The generation results on large-scale molecule dataset MOSES~\citep{Moses} reported in \cref{moses} show that LGD has superior performance compared with DiGress in terms of validity and novelty metrics. Apart from DiGress and ConGress, we are the only model to scale to this large dataset, bridging the gap between diffusion-based one-shot graph generation model and other traditional methods like SMILES-based, fragment-based, and autoregressive models. 

\begin{table*}[t]
\caption{Large-scale generation on MOSES~\citep{Moses} dataset.}
\label{moses}
\vskip -0.15in
\begin{center}
\begin{tabular}{lc|cccc}
\toprule
Model & Class & Validity & Uniqueness & Novelty & FCD\\
\midrule 
VAE & SMILES & 97.7 & 99.8 & 69.5 & 0.57\\
JT-VAE & Fragment & 100 & 100 & 99.9 & 1.00\\
GraphINVENT & Autoreg. & 96.4 & 99.8 & - & 1.22\\
ConGress & One-shot & 83.4 & 99.9 & 96.4 & 1.48\\
DiGress & One-shot & 85.7 & 100 & 95.0 & 1.19\\
\midrule
LGD (ours) & One-shot & 97.4 & 100 & 95.9 & 1.42\\

\bottomrule
\end{tabular}
\end{center}
\vskip -0.1in
\end{table*}

\paragraph{Graph-level regression.} We still choose the QM9 dataset for the graph-level regression tasks. Baseline models include MPNN, DTNN~\citep{DTNN}, DeepLRP~\citep{CountSubstructures}, PPGN~\citep{PPGN}, Nested GNN~\citep{NGNN}, and $4$-IDMPNN~\citep{klWL}. We consider the same six properties as in the conditional generation task. The results are reported in \cref{QM9}. LGD achieves SOTA performance in 5 out of 6 properties, and the advantages are significant. This is an inspiring evidence that generative models could outperform traditional models under our framework.

\begin{table*}[t]
\caption{QM9 regression results (MAE $\downarrow$). Highlighted are \textbf{\textcolor{red}{first}}, \textbf{second} best results.}
\label{QM9}
\vskip 0.15in
\begin{center}
\begin{tabular}{l|cccccc|c}
\toprule
Target & MPNN & DTNN & DeepLRP & PPGN & Nested GNN & $4$-IDMPNN & LGD (ours)\\
\midrule
$\mu$ & 0.358 & 0.244 & 0.364 & \textbf{0.231} & 0.433 & 0.398 & \textbf{\textcolor{red}{0.088}} \\
$\alpha$ & 0.89 & 0.95 & 0.298 & 0.382 & 0.265 & \textbf{0.226} & \textbf{\textcolor{red}{0.130}} \\
$\epsilon_{\textrm{HOMO}}$ & 0.1472 & 0.1056 & \textbf{0.0691} & 0.0751 & 0.0759 & 0.0716 & \textbf{\textcolor{red}{0.0363}}\\
$\epsilon_{\textrm{LUMO}}$ & 0.1695 & 0.1393 & 0.0754 & 0.0781 & \textbf{0.0751} & 0.0778 & \textbf{\textcolor{red}{0.0276}}\\
$\Delta \epsilon$  & 0.1796 & 0.3048 & \textbf{0.0961} & 0.1105 & 0.1061 & 0.1083 & \textbf{\textcolor{red}{0.0395}}\\
$c_{\textrm{v}}$ & 0.42 & 0.27 & 0.129 & 0.184 & \textbf{\textcolor{red}{0.0811}} & 0.0890 & \textbf{0.0817}\\

\bottomrule
\end{tabular}
\end{center}
\vskip -0.1in
\end{table*}

\begin{wraptable}{r}{5.5cm}
\caption{Ogbg-molhiv results (AUC $\uparrow$). Shown is the mean $\pm$ std of 5 runs.}
\label{Table_molhiv}
\vskip -0.15in
\begin{center}
\begin{tabular}{l c}
\toprule
Method & Test AUC \\
\midrule
PNA  & $79.05\pm1.32$\\
DeepLRP & $77.19\pm1.40$\\
NGNN & $78.34\pm 1.86$\\
KP-GIN+-VN & $78.40\pm0.87$\\
$I^2$ -GNN & $78.68\pm0.93$\\
CIN & $\textbf{80.94}\pm0.57$\\
SUN(EGO) & $80.03\pm0.55$\\
GPS &  $78.80 \pm 1.01 $\\
\midrule
LGD-DDPM (ours) & $78.49\pm0.96$\\
\bottomrule
\end{tabular}
\end{center}
\vskip -0.1in
\end{wraptable}

\paragraph{Graph-level classification.} For graph-level classification task, we choose the ogbg-molhiv~\citep{OGB} dataset which contains 41k molecules. The task is a graph binary classification to predict whether a molecule inhibits HIV virus replication or not, and the metric is AUC. We use the official split of the dataset. PNA~\citep{PNA}, DeepLRP~\citep{CountSubstructures}, NGNN~\citep{NGNN}, KP-GIN~\citep{Khop}, $I^2$-GNN~\citep{I2GNN}, CIN~\citep{CIN} and SUN(EGO)~\citep{UnderstandingSymmetry} are selected as baselines.
As shown in \cref{Table_molhiv}, LGD exhibits comparable performance compared with state-of-the-art models. A possible reason is that ogbg-molhiv adopts scaffold split which is not a random split method, leading to imbalance samples in train/validation/test sets. As generative models directly model the distribution, a distribution shift may negatively influence the performance.

\paragraph{Node and edge classification.} The results of node and edge classification on ciatation networks Cora and PubMed are reported in \cref{Planetoid}. The common $60\%, 20\%, 20\%$ random split is adopted. LGD significantly outperforms all the baselines, including OFA~\citep{OFA} which uses LLMs as augmented features.

\begin{table*}[t]
\caption{Node-level and edge-level classification tasks (accuracy $\uparrow$) on two datasets from Planetoid. Reported are mean $\pm$ std over 10 runs with different random seeds. Highlighted are \textbf{\textcolor{red}{best}} results.}
\label{Planetoid}
\vskip 0.15in
\begin{center}
\begin{tabular}{l|cccc}
\toprule
Dataset & Cora & PubMed & Cora & PubMed \\
Task type & Link & Link & Node & Node \\
\midrule 
GCN & $90.40\pm0.20$ & $91.10\pm0.50$ & $87.78 \pm 0.96$ & $88.9 \pm 0.32$ \\
GAT & $93.70\pm0.10$ & $91.20\pm 0.10$ & $83.00\pm 0.70$ & $83.28 \pm 0.12$ \\
\midrule
OFA~\citep{OFA} & $94.53\pm0.51 $ & $98.59\pm0.10$ & $74.76\pm1.22$ & $78.25\pm0.71$\\
\midrule
ACM-GCN~\citep{ACMHeterophily} & - & - & $89.13 \pm 1.77$ & $90.66 \pm 0.47$\\
ACM-GCN+ & - & - & $89.75 \pm 1.16$ & $90.46 \pm 0.69$\\
ACM-Snowball-3 & - & - & $89.59 \pm 1.58$ & $91.44 \pm 0.59$	\\

\midrule
LGD (ours) & \textbf{\textcolor{red}{$96.48\pm 0.17$}} & \textbf{\textcolor{red}{$99.03\pm 0.08$}} & \textbf{\textcolor{red}{$93.91\pm 0.55$}} & \textbf{\textcolor{red}{$92.88\pm 0.29$}} \\

\bottomrule
\end{tabular}
\end{center}
\vskip -0.1in
\end{table*}

\subsection{Experimental settings}\label{SubsecExpSettings}

We now provide more details of the experiments in \cref{SectionExperiment}, including experimental settings and implementation details.

\subsubsection{General settings}

\paragraph{Diffusion process.}

In all experiments, we use a diffusion process with $T=1000$ diffusion steps, parameterized by a linear schedule of $\alpha_t$ and thus a decaying $\bar\alpha_t$. For inference, we consider both (a) DDPM; (b) DDIM with $200$ steps and $\sigma_t=0$. Instead of predicting the noises, we use the parameterization where denoising (score) networks $\epsilon_\theta$ is set to directly predict the original data $\rvx_0$. We set $\lambda=\mathbf 1$ in \cref{equation_obj_raw} as most literature for simplicity.

\paragraph{Model architectures.}

In our main paper, we introduce the formulation where we generate the full $\mathbf A\in \mathbb R^{n\times n\times d}$, so that we have a unified framework that can tackle tasks of all levels. However, this does not necessarily mean that we always need to generate $\mathbf A$---instead, we can generate only the features that we desire. Correspondingly, although we use the specially designed graph transformers and cross-attention mechanism for denoising networks, other architectures (e.g. general GNNs) are also applicable, as we have already mentioned in the main text (see \cref{SubsecOverview}). For example, in node classification tasks where we only need to predict the labels of nodes, we can generate only the node features, where a simple MPNN (e.g., GCN) would be enough for the denoising network. We implement this setting in some of our experiments, showing that our LGD can \textbf{scale to extremely large graphs} with over 169K nodes (e.g., OGBN-Arxiv). The actual training and inference procedures are also very efficient, for example each epoch takes only $\sim$0.2s in Cora dataset.

For all graph-level tasks, we use the encoder $\mathcal E_\phi$ with the augmented-edge enhanced graph transformer as described in \cref{EquationEdgeAtten}. For all node-level tasks, we use an MPNN model (e.g. GINE~\citep{PretrainGINE}, GCN~\citep{GCN} or GAT~\citep{GAT}) as the encoder. For regression and classification tasks, as the input graph structures are complete, we can also add positional encoding and MPNN modules to the encoder. For all tasks, each task-specific decoder $\mathcal D_\xi$ is a single linear layer for reconstruction or prediction. Due to the discussion in \cref{SectionTheoryProof}, we choose the dimension of the latent space from $[4, 8, 16, 32]$. We pretrain the autoencoders separately for each tasks. Pretraining across datasets (especially across domains) is a more challenging setting, but may also benefit the downstream tasks, which would be an interesting future direction. 

For denoising (score) network $\epsilon_\theta$, we consider two cases: (i) we want to generate structures and features simultaneously; (ii) we only want to generate3 features. In the first case, due to the reason explained in \cref{SubsecDiffusionModel}, MPNNs and PEs are ill-defined in the corrupted latent space $\mathcal H_t$ and are hence not applicable. Therefore, we use pure graph transformer architectures for $\epsilon_\theta$. We classify all generation tasks and graph-level prediction tasks into this case. For unconditional generation, each layer computes self-attention as in \cref{EquationEdgeAtten}. For conditional generation, each module consists of a self-attention layer, a cross-attention layer, and another self-attention layer. For QM9 conditional generation tasks where the conditions are molecular properties, we use an MLP as the conditioning encoder $\tau$ and the cross-attention in \cref{equation_CrossAtten_normal}. For regression on Zinc and classification on ogbg-molhiv, since the conditions are masked graphs, we use the specially designed cross-attention in \cref{EquationCrossAtten}. In these cases, the conditioning encoder $\tau$ is shared with the graph encoder $\mathcal E_\phi$. To enable good representation of both masked graphs and the joint representation of graphs with labels, we randomly mask the labels in the pretraining stage, so that both $\mathcal E_\phi(\mX, \mA)$ and $\mathcal E_\phi(\mX,\mA,\vy)$ have small reconstruction errors - though the latter is typically much smaller than the former one. We do not add virtual nodes in QM9 datasets, and the graph feature is obtained by (sum or mean) pooling the nodes (and optionally, edges). For Zinc and ogbg-molhiv, we add a virtual node where we embed the graph-level label via the encoder to obtain target refined full graph representations, and we use an embedding layer for class labels and an MLP for regression labels. All networks have ReLU activations, and we use layer norm instead of batch norm for all transformers for training stability. However, in case (ii), as the structures are clearly defined and known, we can still utilize the MPNN family denoising models. We classify all node-level tasks into this case so that we could use MPNNs with $O(n)$ complexity to avoid the extensive computation of attention mechanisms.

\subsubsection{Task specific settings}

\paragraph{Unconditional generation on QM9.} Here we provide other hyperparemeters for the experiments of QM9 unconditional generation. We use an encoder with 96 hidden features and 5 layers. The denoising network has 256 hidden channels and 6 self-attention layers. We train the model for 1000 epochs with a batch size of 256. We adopt the cosine learning rate schedule with 50 warmup epochs, and the initial learning rate is $1e-4$.

\paragraph{Conditional generation on QM9.} We train a separate model for each target. The encoders have 3 layers and 96 hidden channels. The denoising networks have 4 cross-attention modules (8 self-attention layers and 4 cross-attention layers in total) and 128 hidden channels. We train for 1000 epochs with a batch size of 256, and the cosine learning rate schedule with initial learning rate $1e-4$.

\paragraph{Regression on Zinc.} The encoder has 64 hidden channels and 10 layers, augmented with GINE~\citep{PretrainGINE} and RRWP positional encodings~\citep{GraphIBwithoutMP}. The score network has 64 hidden channels and 4 cross-attention modules (8 self-attention layers and 4 cross-attention layers in total). We train the diffusion model for 2000 epochs with a batch size of 256, and a cosine learning rate with 50 warm-up epochs and initial value $1e-4$. We also conduct ablation studies on Zinc, see \cref{SubsecAblation}.

\paragraph{Classification on ogbg-molhiv.} Same as Zinc, the encoder has 64 hidden channels and 10 layers, augmented with GINE~\citep{PretrainGINE} and RRWP positional encodings~\citep{GraphIBwithoutMP}. The score network also has 64 hidden channels and 4 cross-attention modules. We train the diffusion model for 500 epochs with a batch size of 256, and a cosine learning rate with 20 warm-up epochs and initial value $1e-4$. We observe overfitting phenomenon in the early stage, which may be caused by the imbalanced distribution between training set and test set, see \cref{SubsecFindings} for discussions.

\paragraph{Classification on node-level datasets.} We use the combination of GCN~\citep{GCN} and GAT~\citep{GAT} as the encoder, with 7 layers and 160 hidden dimensions. We use LapPE and RWSE. The score network has 192 hidden channels and 4 cross-attention modules. We train the diffusion model for 500 epochs with a batch size of 1, and a cosine learning rate with 50 warm-up epochs and initial value $1e-4$.


\subsection{Ablation studies}\label{SubsecAblation}

We conduct ablation studies on Zinc dataset to investigate (a) the impacts of the dimension $d$ of the latent embedding; (b) whether ensemble technique helps to improve the prediction quality as the generation is not deterministic.

\paragraph{Dimension of the latent embedding.} We use encoders with output dimension (i.e. the dimension of the latent embedding for LGD) $4, 8, 16$ respectively, and inference with the fast DDIM for 3 runs with different random seeds in each setting.

\begin{table}[t]
\caption{Ablation study on latent dimension on Zinc (MAE $\downarrow$). Shown is the mean $\pm$ std of 3 runs.}
\label{Table_zinc_dimension}
\vskip 0.15in
\begin{center}
\begin{tabular}{c c}
\toprule
Latent dimension & Test MAE \\
\midrule
4 & $0.081\pm 0.006$\\
8 & $0.080\pm 0.006$\\
16 & $0.084\pm 0.007$\\
\bottomrule
\end{tabular}
\end{center}
\vskip -0.1in
\end{table}

According to the results in \cref{Table_zinc_dimension}, LGD-DDIM reveals similar performance with these latent dimensions, while $d=16$ leads to a slight performance drop. This coordinates with our theoretical analysis in \cref{SectionTheoryProof}, as large dimensions of latent space may cause large diffusion errors. It would be interesting to see the performances in a much smaller (e.g. $d=1$) or much larger (e.g. $d=64$) latent space.

\paragraph{Ensemble of multiple generations.} One possible issue with solving regression and classification tasks with a generative model is that the output is not deterministic. To investigate whether the randomness improve or hurt the performance, we test with DDIM using a same trained model which ensembles over $1,5,9$ predictions respectively. We use the median of the multiple predictions in each case.

\begin{table}[t]
\caption{Ablation study of ensembling on Zinc (MAE $\downarrow$). Shown is the mean $\pm$ std of 3 runs.}
\label{Table_zinc_ensemble}
\vskip 0.15in
\begin{center}
\begin{tabular}{c c}
\toprule
\# of ensembled predictions & Test MAE \\
\midrule
1 & $0.081\pm 0.006$\\
5 & $0.081\pm 0.006$\\
9 & $0.079\pm 0.006$\\
\bottomrule
\end{tabular}
\end{center}
\vskip -0.1in
\end{table}

As shown in \cref{Table_zinc_ensemble}, ensembling over multiple predictions has almost no impacts on the performance. This is somewhat surprising at the first glance. It shows that the diffusion model can often output predictions around the mode with small variances, thus modeling the distributions well. As mentioned in the main text, the SDE-based DDPM also outperforms ODE-based DDIM, aligns with previous empirical and theoretical studies that the randomness in diffusion models could lead to better generation quality~\citep{ODEvsSDEOptimal}.

\subsection{Experimental findings and discussions}\label{SubsecFindings}

\paragraph{Evaluation metrics in unconditional molecule generation.} We now further discuss the evaluation metrics of unconditional molecule generation with QM9 dataset. In the main text we report the widely adopted validity and uniqueness metrics as most literature do. However, it is notable that the validity reported in \cref{Table_QM9_unconditional} (and also in most papers) is computed by building a molecule with RdKit and calculating the propertion of valid SMILES string out of it. As explained in \cite{ScorebasedGraphDSS}, QM9 contains some charged molecules which would be considered as invalid by this method. They thus compute a more relaxed validity which allows for partial charges. As a reference, GDSS-seq and GDSS~\cite{ScorebasedGraphDSS} achieve $94.5\%$ and $95.7\%$ relaxed validity, respectively; in comparson, our LGD achieves $95.8\%$ relaxed validity, which is also better. We do not report the novelty result for the same reasons as explained in \citep{DiGressDDGraph}. This is because QM9 is an exhaustive enumeration of the small molecules that satisfy certain chemical constrains, thus generating novelty molecules outside the dataset may implies the model has not correctly learned the data distribution.

\paragraph{Experimental findings.}

In our experiments, we observe many interesting phenomenon, which are listed below. We hope these findings could give some insights into further studies.

\begin{itemize}
    \item Regularization of the latent space. We experimentally find that exerting too strong regularization to the latent space, e.g. a large penalty of KL divergence with the standard Gaussian, would lead to performance drop of the diffusion model. This is understandable as the latent space should not only be simple, but also be rich in information. Strong regularizations will cause the model to output a latent space that is too compact, thus hurt the expressiveness. We find in our experiments that a layer normalization of the output latent representations tends to result in better generation qualities, while KL or VQ regularization are not necessary.
    \item Rapid convergence. We find that our latent diffusion model tends to converge rapidly in the early training stage. Compared with non-generative models such as GraphGPS~\citep{GPS}, LGD typically needs only $\frac{1}{5}\sim \frac{1}{3}$ number of epochs of them to achieve similar training loss, which makes the training of LGD much faster. We attribute this to the advantages of the powerful latent space, as the diffusion model only needs to ``refine'' the representation on top of the condition that is already close to ground truth.
    \item Generalization. Interestingly, we find that while LGD usually have good predictions (in both train and test set) in the early stage of training, the test loss may increase in the final stages, forming a ``U'' curve in test loss. This is obviously a sign of overfitting due to the strong capacity of diffusion models to model complex distributions. If the distribution shift between train and test set becomes larger (e.g. ogbg-molhiv compared with Zinc), the overfitting phenomenon gets more obvious. This implies that LGD is extremely good at capturing complex distributions, and thus has the potential to scale well to larger datasets that fits with the model capacity.
\end{itemize}

\section{Discussions}

\subsection{More discussions on graph generative models}

Among graph generative models, although some auto-regressive models show better performance, they are usually computationally expensive. Moreover, one worse property of auto-regressive graph generative models cannot maintain the internal permutation invariance of graph data. On the other hand, some one-shot graph generative models are permutation invariant, but those based on VAEs, normalizing flows and GANs tend to under-perform autoregressive models. Currently, graph diffusion generative models are the most effective methods, but they have to overcome the difficulty brought by discreteness of graph structures and possibly attributes. Consequently, \citep{ScorebasedGraph} and \citep{ScorebasedGraphDSS} can only generate the existence of edges via comparing with a threshold on the continuous adjacency matrix $\mathbf A\in \mathbb R^{n\times n}$. Besides, \citep{ScorebasedGraph} cannot deal with node or edge features, while \citep{ScorebasedGraphDSS} can only handle node attributes. DiGress~\citep{DiGressDDGraph} operates on discrete space, which may have positive impacts on performance in some cases where discrete structures are important (e.g. molecule generation or synthetic graph structure data). However, DiGress is only suitable for discrete and categorical data, while they fail to handle continuous features. All these score-based or diffusion models are unable to provide all types of node and edge (and even graph) features, including categorical and continuous ones. In comparison, our model is able to handle features of all types including continuous ones. Moreover, our continuous formulation is empirically more suitable for well established classical diffusion models compared with discrete space. Our method also differs from \citet{GeometricLatentDiffusion}, where their model can only generate nodes and requires additional computations to predict edges based on the generated nodes. Overall, our LGD is the first to enable generating node, edge and graph-level features of all categories in one shot.


Some concurrent works provide some insights and implications on graph generation. \citet{DirectionalGDMRepresentation} incorporates data-dependent, anisotropic and directional noises in the forward diffusion process to alleviate transforming anisotropic signals to noise too quickly. The denoising network directly predicts the input node features from perturbed graphs, and \citet{DirectionalGDMRepresentation} also propose to use diffusion models for unsupervised graph representation learning by extracting the feature maps obtained by the decoder. \citet{ContinousGraphDiffusionFunctional} estalishes a connection between discrete GNN models and continuous graph diffusion functionals through Euler-Lagrange equation. On top of this, \citet{ContinousGraphDiffusionFunctional} propose two new GNN architectures: (1) use total variation and a new selective mechanism to reduce oversmoothing; (2) train a GAN to predict the spreading flows in the graph through a neural transport equation, which is used to alleviate the potential vanishing flows.

\subsection{Building the pretrain, finetune and in-context learning framework upon LGD} \label{SectionTrain}

\paragraph{A complete training framework.}

We now describe the overall architecture and training procedure of the general purposed graph generative model that can be built upon LGD, and discuss the potential of LGD as the framework for future graph foundation models. As described before, the entire model consists of a powerful graph encoder $\mathcal E$ and latent diffusion model $\theta$, which can be shared across different tasks and need to be pretrained - although we currently train them separately for each task. As for different downstream tasks, we can adopt finetuning or prompt tuning strategies. In both cases, we need to train a lightweight task-specific graph decoder $\mathcal D$. We need to update the parameters of $\mathcal E$ and $\theta$ while finetuning. In prompt tuning, however, we keep these pretrained parameters fixed and only train a learnable prompt vector concatenated with input features of prompt graphs, which enables in-context learning ability for pretrained graph models. Notably, some prior work like PRODIGY~\citep{PRODIGY} conduct in-context learning by measuring the "similarity" between test and context examples, which is different from the conditional generation and prompting as in NLP and LGD.

A promising future direction is to pretrain the graph encoder $\mathcal E$ and the generative model $\theta$ across a wide range of tasks. These two parts can be trained simultaneous or sequential, as well as through different paradigm. Here, $\mathcal E$ includes the initial tokenization layer as well as a powerful graph learner backbone (e.g. a GNN or graph transformer) that encodes input graph into a semantic meaningful latent space for all downstream tasks. Pretrain of $\mathcal E$ consists of both unsupervised learning and supervised learning trained on downstream tasks with labels. The unsupervised loss includes both contrastive loss $\mathcal{L}_{contrast}$ and reconstruction loss of self-masking $\mathcal L_{recon}$. The supervised learning is jointly trained with a downstream classifier or regression layer (that will not be used later), according to classification entropy loss $\mathcal L_{ce}$ or regression MSE/MAE loss $\mathcal L_{regress}$.

We train the diffusion model $\theta$ in a supervised manner. Extra tokens are applied to represent masked values or attributes that need to be predicted. In traditional generation tasks, the training procedure is similar to stable-diffusion, where we compute loss between predicted denoised graph and the input graph in the latent space encoded by $\mathcal E$. As for classification and regression tasks, we only input part of graph features, and expect the model to reconstruct the input features while predicting those missing features. For example, in a link prediction task, the input graph contains all node features and only part of edges. Then the model need to reconstruct the known input features and simultaneously predict the rest links. Therefore, the loss is the summation of reconstruction loss (in the latent space) and the corresponding classification or regression loss.

For tasks of different levels and types, note that the reconstructed graph representation in the latent space contains all node and edge features that are powerful enough for any downstream tasks. Thus only a (lightweight) task specific decoder $\mathcal D$ is needed, like a classifier for a classification task and a regression layer for a regression task. The node and edge level attributes can be directly obtain based on corresponding latent representation, while graph attributes is available by pooling operations, or by adding a virtual node with trainable graph token for every task. To train a task specific decoder, we can either finetune along with the entire pretrained model on the target dataset (whole dataset and few shot are both possible), or only train them with prompt vectors while fixing the parameters of pretrained graph encoder $\mathcal E$ and generative model $\theta$.

\paragraph{Domain adaptation.}

We now present the potential of our framework in domain adaptation, while leaving the experiments for future work. To deal with graphs from various domains, we create a graph dictionary for all common node and edge types in graph datasets, such as atom name in molecular graphs. The vectorization is via the first embedding layer of the graph encoder network $\mathcal E$, which is analogous to text tokenization in LLM. For those entities rich in semantic information, we can also make use of pretrained foundation models from other domains or modals through cross attention mechanism in the latent diffusion model.

After finishing pretraining the powerful graph encoder $\mathcal E$ and the latent generative model $\theta$, we need to adapt the entire model to target tasks. Previously, a widely adopted paradigm in graph learning field is the pretrain-finetune framework. During the finetune procedure, we jointly train the task specific decoder $\mathcal D$, and finetune $\mathcal E$ and $\theta$ on the target task. However, finetuning has at least the following drawbacks: (1) significantly computation cost compared with prompting, since we need to compute gradients of all parameters; (2) may lead to catastrophic forgetting of the pretrained knowledge, thus performing worse on other tasks and possibly requiring one independent copy of model parameters for each task. 

Prompting is another strategy to adapt the pretrained model to downstream tasks, which designs or learns a prompt vector as input while keeping the pretrained $\mathcal E$ and $\theta$ fixed. Compared with finetuning, prompting requires much less computation resources since we only need to learn an input vector. Here we introduce a graph prompt mechanism which is the first to allow for real in-context learning. In the target task, we provide prompt example graphs with labels, whose input node and edge features are concatenated with a learnable task-specific prompt vector. The prompt vectors are learned through this few shot in-context learning, which adapt the model better to the task without retraining or finetuning. Besides, to keep the hidden dimension aligned, we also concatenate a prompt vector in the pretrain and finetune stages, which can be serve as initialization of downstream prompt vectors in similar tasks.

\subsection{Complexity of LGD and comparison with traditional models}

As is shown in \cref{ExperimentAppendix} empirically, LGD has good scalability with proper settings. We now provide a thorough discussion on the computation complexity and scalability of LGD. 

The complexity of diffusion models mainly depends on: (i) the internal complexity of the denoising networks; (ii) the complexity brought by the diffusion process. For (i) the complexity of the denoising networks, our LGD framework is compatible with flexible network architectures, thus could always strike a good balance between complexity and expressivity. We consider both the generation tasks and prediction tasks. In the generation setting, we have $n^2$ complexity as we utilize our proposed graph transformers, which is generally unavoidable while generating from scratch. It is remarkable that previous models also have large complexity. Actually, GDSS~\citep{ScorebasedGraphDSS} also models the entire $n\times n$ adjacency matrix; GeoLDM~\citep{GeometricLatentDiffusion} and EDM~\citep{EDM3D} both use EGNN, resulting to $n^2$ computation complexity as well; Digress~\citep{DiGressDDGraph} also uses a graph transformer with $n^2$ complexity. However, in the prediction settings (e.g. graph-level and node-level classification and regression tasks), we could leverage arbitrary GNN models when the graph structures are clearly defined - we only need to generate features. The complexity of LGD thus reduces to $O(n)$ if we use MPNN models, which enables LGD to scale to extremely large graphs just as traditional graph models do.

For (ii) the complexity of the diffusion process, since in the training stage we always sample only one diffusion step, the complexity of diffusion models do not differ from traditional models (need only one forward pass of the denoising networks). In the inference stage, while the diffusion model needs to iteratively denoise, the inference mode is internally more efficient than training mode. We have also shown that LGD is capable of incorporating efficient inference methods including DDIM, which requires much less inference steps compared with $T$. Moreover, for easier tasks we can set $T$ a relatively small value, which also helps to improve the overall efficiency.

Superior training efficiency of LGD reported in \cref{ExperimentAppendix} is also remarkable. Actually as we discussed in \cref{SubsecFindings}, we empirically observe that \textbf{the diffusion models have better training efficiency compared with traditional regression or classification models}. This may due to the fact that compared with traditional models, diffusion models can learn the distribution in a very efficient way by through iterative sampling, which decompose the complex distribution transformation into easier steps. LGD also benefit from the powerful encoder, which uses the latent embedding containing label information as the training target of diffusion models. In detail, the diffusion models tries to recover $\mathcal E(x, y)$ conditioning on $\mathcal E(x)$. When the encoder already have good representations (i.e. $\mathcal E(x)$ is a good estimator of $y$), the training of diffusion model can be extremely efficient. As our theory in \cref{SectionTheoryProof} reveals, \textbf{the latent diffusion models can outperform the encoders} (which can be regarded as traditional regression or classification models) with some mild assumptions hold. Besides the rapid convergence in early training stage, we also observe some overfitting phenomenon (especially on smaller datasets), which indicate LGD's power in learning distributions and its potential to perform better when scale to larger datasets - it would be an interesting future direction to observe whether there would be emergence ability of diffusion-based generative models like LGD.

Moreover, we can always sample subgraphs for the large graphs so that even the full graph transformers would be applicable. In summary, our framework is general and extremely flexible. We can always balance between the complexity and effectiveness according to different task settings.

\subsection{Discussion on solving prediction tasks with diffusion models} 

Although \citet{DiffZeroshotClassifier} also mention the idea of doing classification with diffusion models, their formulation is complex and computational costly as they have to enumerate all classes. In our framework, the diffusion model directly learns the conditional distribution $p(y|x)$, while the diffusion models in \citep{DiffZeroshotClassifier} still learns $p(x|y)$ and they rely on Bayes' theorem to inference $p(y|x)$. Therefore, our model is more efficient since we only need one forward pass of diffusion models, while \citep{DiffZeroshotClassifier} needs $N$ passes where $N$ is the number of classes (which could be very large). Moreover, their setting is only suitable for classification, as they cannot apply Bayes' theorem to continuous prediction targets, thus failing to handle regression tasks. In comparison, our formulation is much simpler and more efficient, and is applicable to both regression and classification tasks. We also derive error bounds for the conditional latent diffusion models in these tasks, see \cref{SectionTheoryProof}. 

It is also interesting to explore the possibility of applying diffusion models to regression and classification tasks in other fields, such as computer vision. It remains to see whether diffusion models could outperform traditional non-generative models in all task types, both inside and outside the graph learning field.


\end{document}